\documentclass[sigconf]{acmart}

\usepackage{forest}
\usepackage{float}
\usepackage{stfloats}
\usepackage{booktabs}
\usepackage{multirow}
\usepackage{threeparttable}
\usepackage{siunitx}
\usepackage{tabularx}
\usepackage[table]{xcolor}
\usepackage{enumitem}
\usepackage{amssymb}

\AtBeginDocument{
  }

\copyrightyear{2026}
\acmYear{2026}
\setcopyright{cc}
\setcctype{by}
\acmConference[KDD '26]{Proceedings of the 32nd ACM SIGKDD Conference on Knowledge Discovery and Data Mining V.2}{August 09--13, 2026}{Jeju Island, Republic of Korea}
\acmBooktitle{Proceedings of the 32nd ACM SIGKDD Conference on Knowledge Discovery and Data Mining V.2 (KDD '26), August 09--13, 2026, Jeju Island, Republic of Korea}
\acmDOI{10.1145/3770855.3818998}
\acmISBN{979-8-4007-2259-2/2026/08}

\begin{document}

\title[Co4ICF]{Co4ICF: Co-evolving Physics-Informed Surrogate and RL-based Pulse Optimizer for Inertial Confinement Fusion}

\author{Jiatong Zhao}
\authornote{These authors contributed equally to this research.}
\orcid{0009-0004-2327-2604}
\email{zhaojiatong@sjtu.edu.cn}
\affiliation{
  \department{School of Physics and Astronomy}
  \department{Zhiyuan College}
  \institution{Shanghai Jiao Tong University}
  \city{Shanghai}
  \country{China}
}

\author{Tengyue Zhang}
\authornotemark[1]
\orcid{0009-0005-4752-7584}
\email{zhangty_23@sjtu.edu.cn}
\affiliation{
  \department{School of Physics and Astronomy}
  \department{Zhiyuan College}
  \institution{Shanghai Jiao Tong University}
  \city{Shanghai}
  \country{China}
}

\author{Yuhan Wang}
\authornotemark[1]
\orcid{0009-0002-1171-2321}
\email{wang.yuhan@sjtu.edu.cn}
\affiliation{
  \department{School of Physics and Astronomy}
  \department{Zhiyuan College}
  \institution{Shanghai Jiao Tong University}
  \city{Shanghai}
  \country{China}
}

\author{Fuyuan Wu}
\email{fuyuan.wu@sjtu.edu.cn}
\affiliation{
  \department{School of Physics and Astronomy}
  \institution{Shanghai Jiao Tong University}
  \city{Shanghai}
  \country{China}
}

\author{Junchi Yan}
\authornote{Corresponding author.}
\orcid{0000-0001-9639-7679}
\email{yanjunchi@sjtu.edu.cn}
\affiliation{
  \department{School of Artificial Intelligence}
  \institution{Shanghai Jiao Tong University}
  \city{Shanghai}
  \country{China}
}

\renewcommand{\shortauthors}{Jiatong Zhao, Tengyue Zhang, Yuhan Wang, Fuyuan Wu and Junchi Yan}

\begin{abstract}
Offline-trained surrogates for Inertial Confinement Fusion (ICF) suffer a well-known failure mode that iterative optimizers drive inputs into out-of-distribution (OOD) regions where predictions become unreliable.
Here we present \textbf{Co4ICF}, a co-evolving framework that couples a physics-informed surrogate with a PPO-based pulse optimizer.
The surrogate is iteratively fine-tuned on policy-induced trajectories, correcting extrapolation errors as the optimizer shifts the input distribution; the optimizer queries this evolving surrogate as a fast environment.
In the 1D MULTI environment, Co4ICF achieves \textbf{146.1\%} normalized yield based on current laser design baseline; as a post-hoc cross-fidelity check, the optimized pulse further attains \textbf{246.9\%} normalized yield when directly evaluated in 2D-MULTI without any 2D training or fine-tuning.
Budget-matched ablations support that the gains are not explained solely by additional simulation data and are consistent with the co-evolving mechanism playing a key role.
We release a large-scale MULTI-IFE simulation dataset to support future benchmarking.
\end{abstract}

\begin{CCSXML}
<ccs2012>
   <concept>
       <concept_id>10010405.10010432.10010441</concept_id>
       <concept_desc>Applied computing~Physics</concept_desc>
       <concept_significance>500</concept_significance>
       </concept>
   <concept>
       <concept_id>10010147.10010341.10010342.10010343</concept_id>
       <concept_desc>Computing methodologies~Modeling methodologies</concept_desc>
       <concept_significance>500</concept_significance>
       </concept>
   <concept>
       <concept_id>10010147.10010257</concept_id>
       <concept_desc>Computing methodologies~Machine learning</concept_desc>
       <concept_significance>300</concept_significance>
       </concept>
   <concept>
       <concept_id>10010147.10010178</concept_id>
       <concept_desc>Computing methodologies~Artificial intelligence</concept_desc>
       <concept_significance>300</concept_significance>
       </concept>
 </ccs2012>
\end{CCSXML}

\ccsdesc[500]{Applied computing~Physics}
\ccsdesc[500]{Computing methodologies~Modeling methodologies}
\ccsdesc[300]{Computing methodologies~Machine learning}
\ccsdesc[300]{Computing methodologies~Artificial intelligence}

\keywords{Inertial Confinement Fusion, Surrogate Modeling, Reinforcement Learning, Pulse Optimization}

\begin{teaserfigure}
  \centering
  \includegraphics[width=0.9\textwidth]{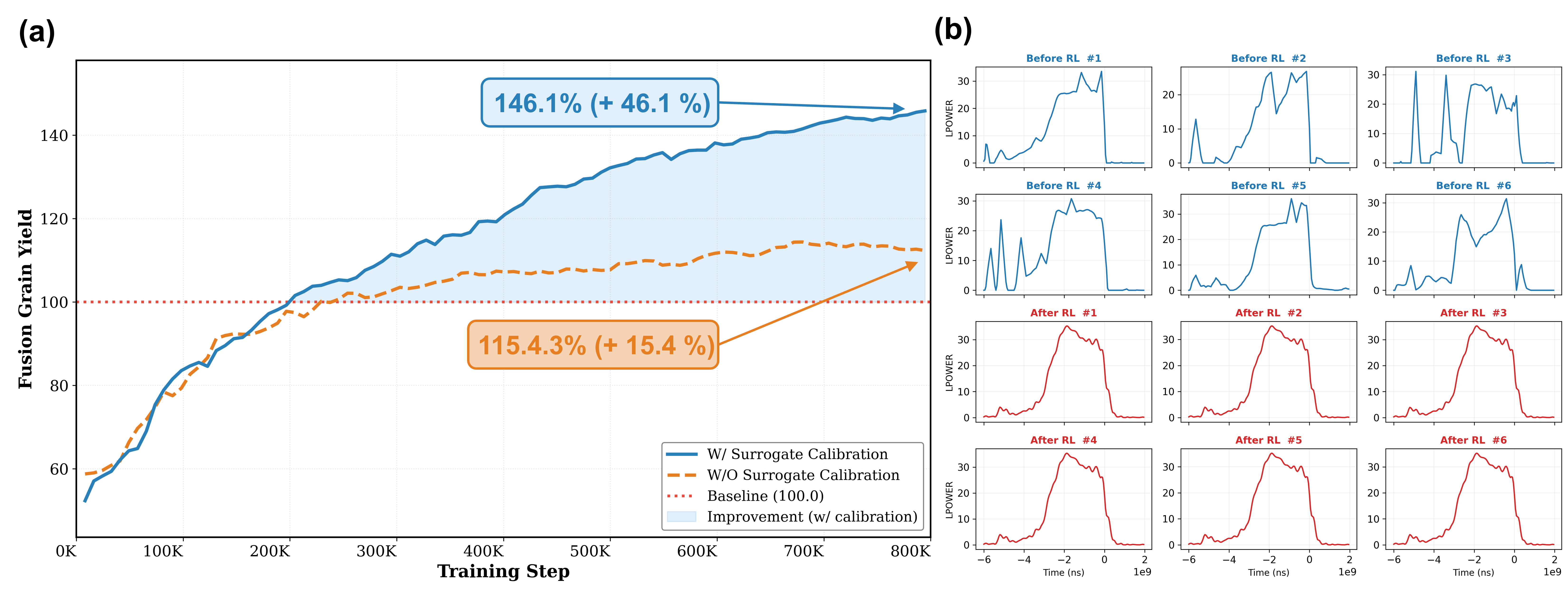}
  \caption{\textbf{Performance of Co4ICF in laser pulse optimization.}
  \textbf{(a) Fusion Yield:} The Co4ICF framework (blue solid line) dynamically calibrates the surrogate, reaching \textbf{146.1\%} normalized yield in the 1D-MULTI optimization loop, i.e., a \textbf{46.1\%} improvement over the designed baseline (red dotted line). This outperforms the static surrogate approach (orange dashed line, +15.4\%), which suffers from distribution shift. (Re-evaluated by 1D-MULTI; see Table~\ref{tab:opt_comparison} for final direct 2D-MULTI evaluation.)
  \textbf{(b) Pulse Optimization:} Comparison between the initial random pulse samples (top rows) and the final RL-optimized waveforms (bottom rows), showing the pulse structures found by the policy.
  }
  \Description{Left panel shows fusion grain yield over training steps, with the calibrated surrogate curve rising above the baseline while the uncalibrated curve plateaus lower. Right panels compare noisy before-RL pulse waveforms with smoother after-RL optimized waveforms.}
  \label{fig:teaser_ppo}
\end{teaserfigure}


\maketitle

\begin{figure*}[!b]
  \centering
  \includegraphics[width=0.8\textwidth]{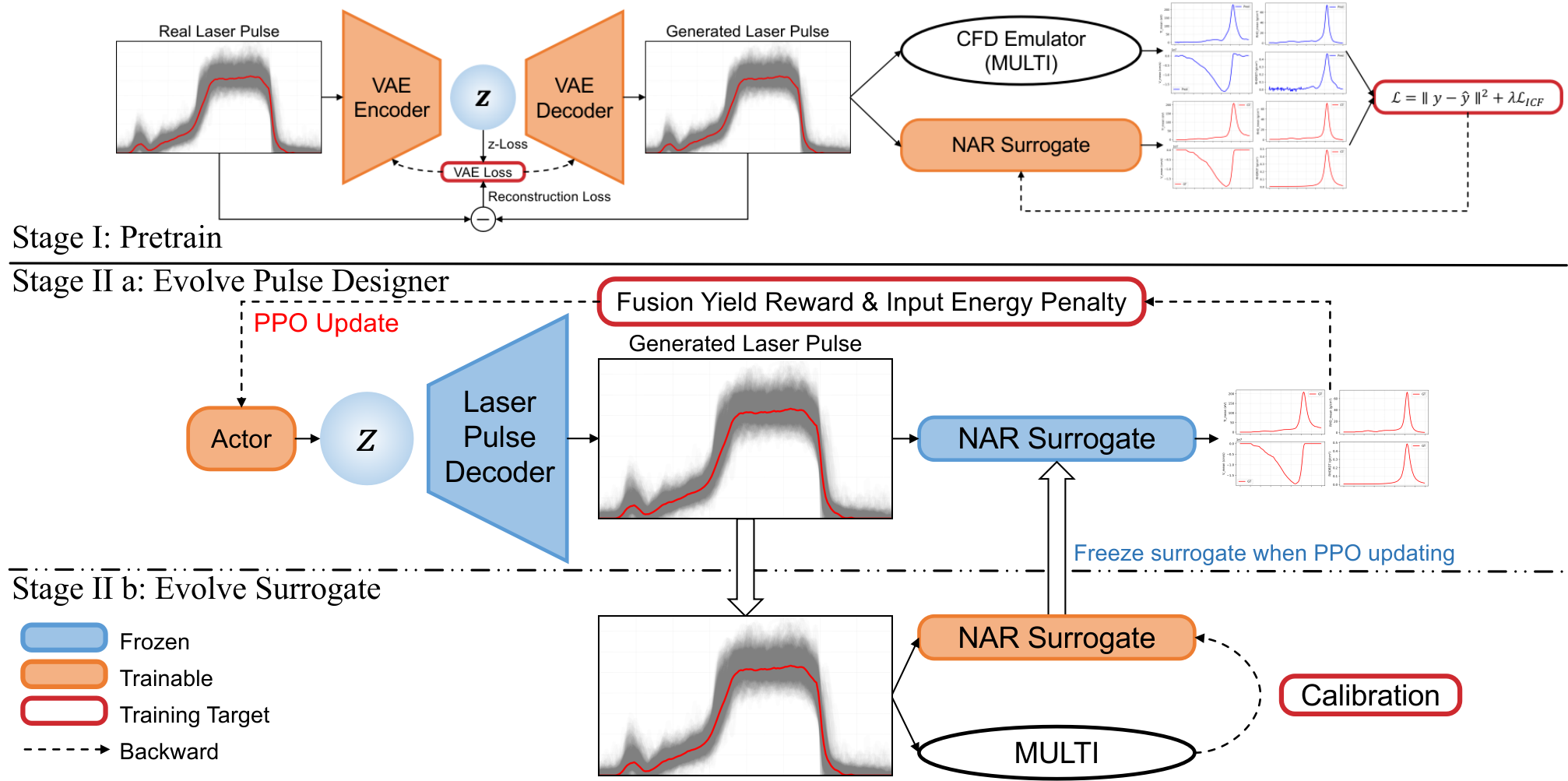}
  \caption{
      Co4ICF framework.
      \textbf{Stage~I (Pretrain)}: A VAE decoder learns a latent representation of laser pulses; the surrogate is trained to mimic MULTI's dynamics with physics-informed regularization.
      \textbf{Stage~II (Evolve)}: The actor samples latent codes, decodes them into pulses, and queries the NAR surrogate for rewards. High-frequency PPO updates optimize the actor; low-frequency surrogate fine-tuning on MULTI-verified on-policy samples corrects distribution shift.
  }
  \Description{Framework diagram showing offline VAE pretraining, pulse decoding, surrogate prediction, PPO actor updates, and periodic surrogate refinement with MULTI-labeled on-policy pulses.}
  \label{fig:teaser}
\end{figure*}

\section{Introduction}
\label{sec:introduction}

High-fidelity numerical simulations are a cornerstone of Inertial Confinement Fusion (ICF) research, but expensive simulations and manual parameter tuning limit their use in design optimization.
Recent AI for Fusion research has begun to alleviate these bottlenecks by applying offline pretrained deep neural surrogates \cite{humbird2019machine,ejaz2024deep,wang2025predictive} and RL-based optimization policies \cite{degrave2022magnetic,capuano2025shaping}.
Given the high dimensional design space, it is difficult for an offline-trained surrogate to span a broad solution manifold, so the performance can degrade sharply under OOD inputs during iterative optimization \cite{trabucco2022design}.
RL methods in contrast require massive on-policy interaction for policy updates, while using high-fidelity simulators as the training environment is not computationally practical.
A further concern is physical consistency, that surrogates and policies trained purely on data can violate conservation laws or known scaling relations \cite{raissi2019physics,karniadakis2021physics}.

To address these bottlenecks, we present \textbf{Co4ICF}: a co-evolving ICF design framework that couples a physics-informed surrogate with a PPO pulse optimizer.
As illustrated in Fig.~\ref{fig:teaser}, Co4ICF couples two modules:
(i) a physics-informed \emph{surrogate} that replaces expensive radiation hydrodynamics simulations (MULTI), and
(ii) an RL-based laser pulse \emph{designer} that parameterizes and searches laser pulses under feasibility and energy constraints.
The core mechanism is a \emph{co-evolving loop}: the designer is updated at high frequency using the surrogate as a cheap environment, while the surrogate is periodically fine-tuned at lower frequency on policy-induced trajectories to correct extrapolation errors caused by distribution shift.

The insight behind this loop is that optimizer-induced OOD drift, conventionally treated as a failure mode, can be recast as a training signal: the very distribution shift that degrades a static surrogate instead provides on-policy data for its refinement.
In the 1D-MULTI optimization loop, Co4ICF reaches \textbf{146.1\%} normalized yield relative to the designed-pulse baseline while delivering a \textbf{990$\times$} rollout-time reduction for high-frequency policy updates.
As a post-hoc cross-fidelity check, the final optimized pulse is directly evaluated in 2D-MULTI and reaches \textbf{246.9\%} normalized yield under the same baseline normalization, without using any 2D samples for training or fine-tuning.
Budget-matched ablations confirm that these gains are not explained solely by additional simulation data and are consistent with the co-evolving mechanism playing a key role.
We provide full access to Co4ICF dataset and implementation: dataset at: \url{https://huggingface.co/datasets/Oyhs/Co4ICFDataset}; code at: \url{https://github.com/Co4ICF/co4icf}.

In short, our main contributions are:
\begin{enumerate}[left=0em]
    \item We propose a \emph{co-evolving} framework that jointly updates an ICF surrogate and a pulse designer, addressing optimizer induced distribution shift in iterative learned-model-based loops.
    \item We develop a \emph{PPO-based} pulse optimizer that searches in a low-dimensional VAE latent pulse manifold and queries the physics-regularized surrogate as a fast environment under energy and feasibility constraints.
    \item Our pipeline reaches \textbf{146.1\%} normalized yield in the 1D-MULTI optimization loop and \textbf{246.9\%} normalized yield in post-hoc direct 2D-MULTI evaluation, while accelerating high frequency policy rollouts by \textbf{$\sim$990$\times$} within 1D search space.
    \item We release a large-scale 1D MULTI simulation dataset and the implementation to facilitate reproducible benchmarking.
\end{enumerate}

\section{Background}
\label{sec:background}

\subsection{Inertial Confinement Fusion and Simulation}
\label{sec:background:icf}

Inertial Confinement Fusion (ICF) is one of the major approaches toward realizing controlled nuclear fusion energy. ICF uses high-energy drivers (like lasers or pulsed power) to rapidly compress a fuel capsule to extreme densities and temperatures, triggering a thermonuclear burn wave before the target disassembles.
In December 2022, the National Ignition Facility (NIF) achieved target gain $G>1$ for the first time, producing 3.15 MJ of fusion yield from 2.05 MJ of laser energy, and subsequent experiments \cite{hurricane2024ignition} have replicated this result.
\begin{figure}[H]
    \centering
    \includegraphics[width=0.4\linewidth]{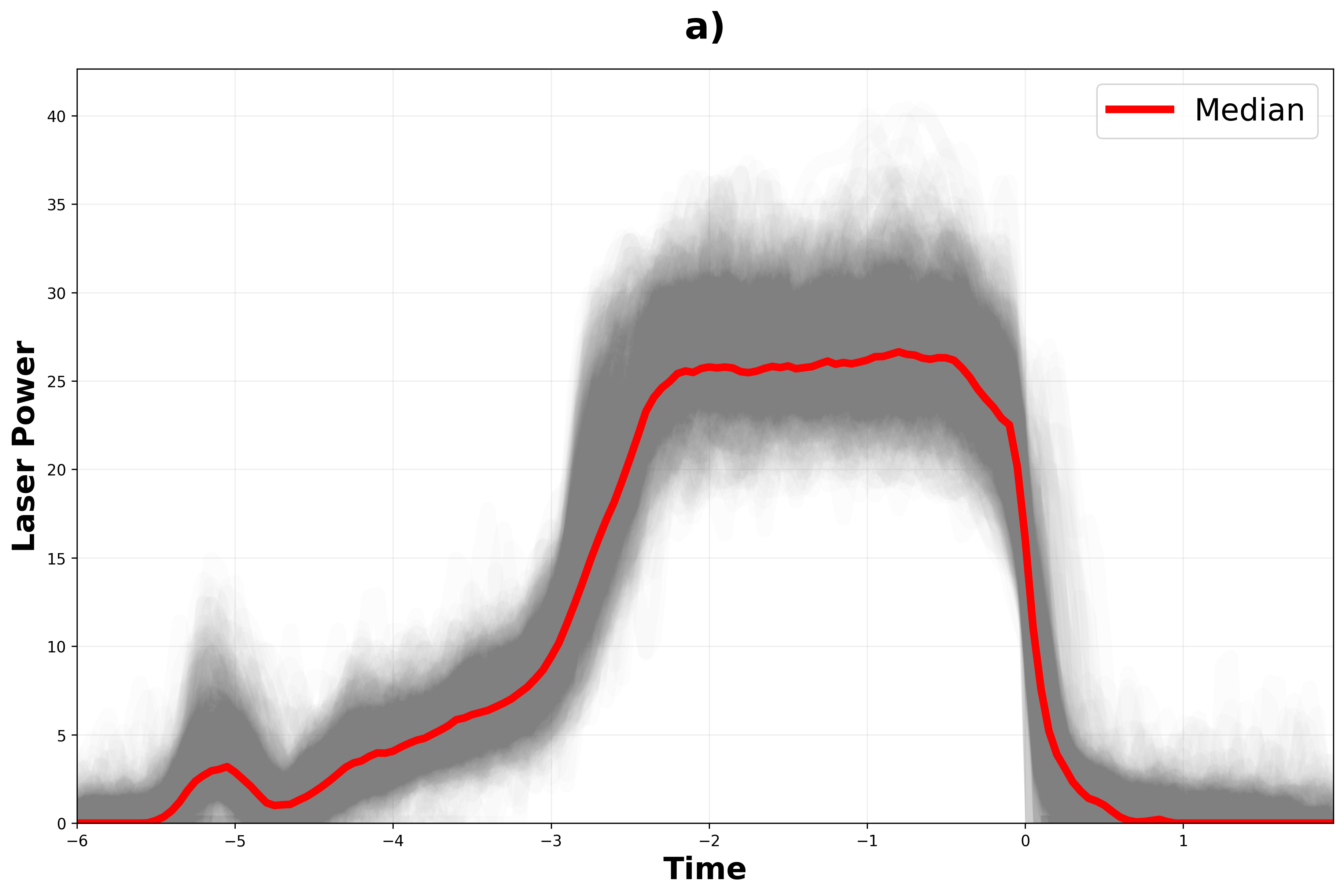}
    \includegraphics[width=0.4\linewidth]{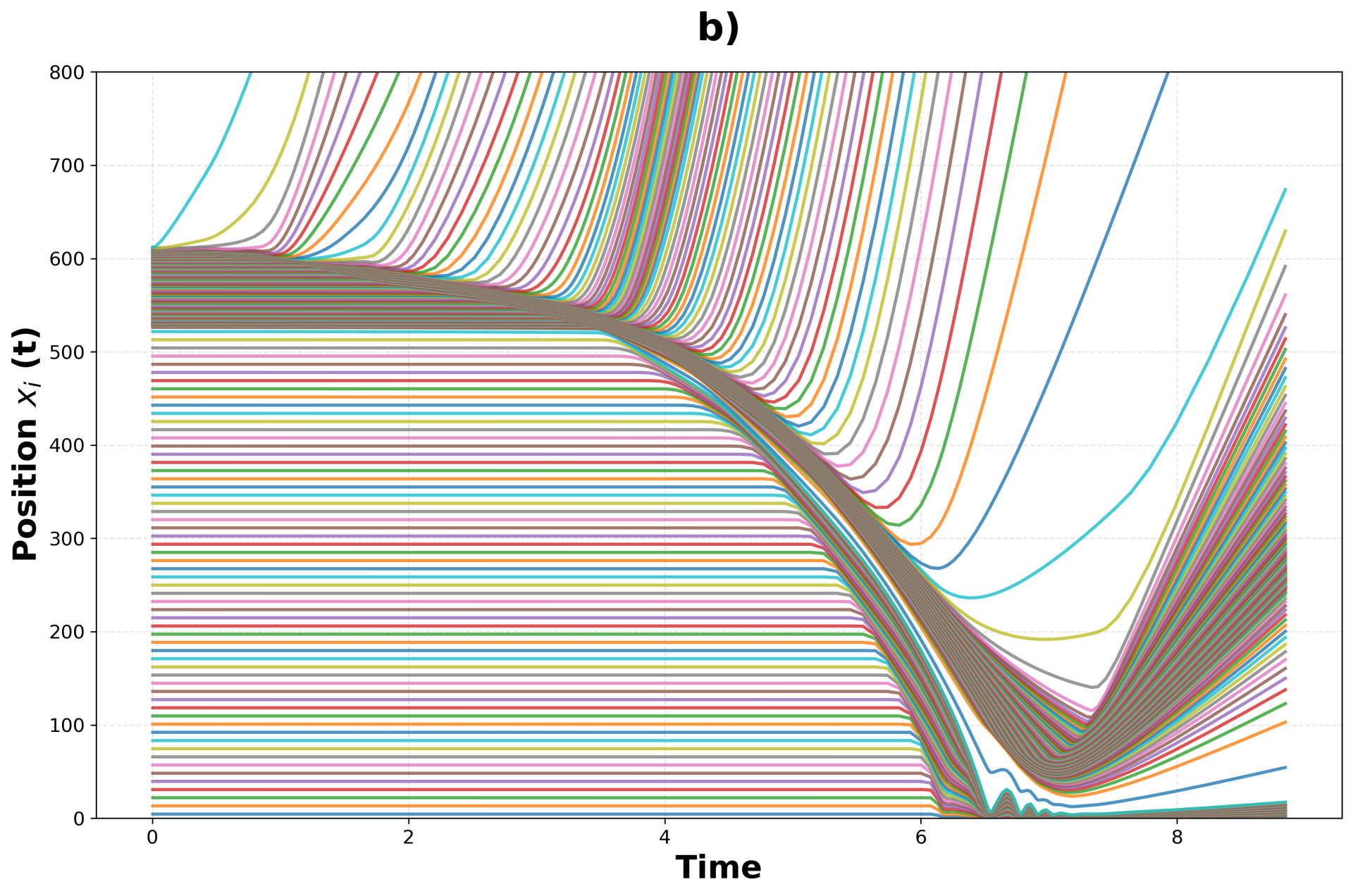}
    \caption{Inputs and outputs of the MULTI-IFE simulation. \textbf{(a)} Distribution of input laser pulses across the dataset. \textbf{(b)} an example 1D-MULTI output: each curve corresponds to a Lagrangian mass layer, showing ablation-driven compression (decreasing radius) followed by ignition.}
    \Description{Two-panel simulation overview: the left plot overlays many laser power pulses with a red median curve, and the right plot shows Lagrangian layer positions compressing inward and then rebounding over time.}
    \label{fig:MULTI-results}
\end{figure}

Given the extreme nonlinearities and prohibitive experimental costs, large-scale numerical simulation is indispensable for interpreting implosion dynamics and optimizing pulse designs.
This work uses MULTI-IFE (also ``MULTI''), an open-source radiation hydrodynamics code \cite{ramis2016multi}. 
MULTI is a Lagrangian code designed for Inertial Fusion Energy studies; it captures multigroup radiation diffusion, separate electron/ion temperatures, and tabulated equations of state while remaining computationally efficient. The three-stage implosion dynamics captured by MULTI are detailed in Appendix~\ref{sec:appendix:icf_stages}.

\subsection{Related Work}
\label{sec:background:surrogate}

\paragraph{Physics-informed surrogates.}
Such constraints improve data efficiency and generalization in PDE surrogates. 
Physics-informed Neural Networks (PINNs) \cite{raissi2019physics} enforce governing equations as soft penalties, and this idea has been adapted to ICF surrogate modeling. 
Early ICF surrogates \cite{humbird2019machine,anirudh2020improved} focused on efficient mapping and physical consistency.
Recent work uses facility-specific architectures \cite{ejaz2024deep} and Transformers \cite{olson2024transformer,wang2025predictive}. 
Transfer learning \cite{humbird2021cognitive} bridges the simulation--experiment gap with sparse experimental data, culminating in predictive capabilities for NIF ignition \cite{spears2025predicting}. 
However, a shared limitation is that these surrogates are trained offline and can degrade sharply under OOD inputs, which is precisely the regime that iterative optimizers actively explore.

\paragraph{ICF pulse optimization.}
On the optimization side, early ICF design relied on heuristic and evolutionary strategies \cite{tao2023laser,li2023hybrid}, later augmented with genetic algorithms \cite{wu2022machine} and DNN surrogates \cite{wei2024machine}. 
Bayesian Optimization (BO) has since become a standard approach, from multi-fidelity simulation frameworks \cite{wang2024multifidelity} to automated experimental campaigns \cite{gopalaswamy2025automated}. 
RL remains scarce in ICF compared to Tokamak control \cite{degrave2022magnetic}, with Capuano et al.\ \cite{capuano2025shaping} as a rare application for pulse shaping. 
A key difficulty common to all iterative methods is that training on a static, offline surrogate leads to \emph{model exploitation}: the optimizer discovers OOD inputs that score high under the surrogate but are physically invalid, motivating approaches where the surrogate and optimizer adapt jointly.

\paragraph{Evolving surrogate and optimizer.}
The idea of alternating model refinement with optimization is well established: the Dyna architecture in model-based RL interleaves real experience with model updates \cite{sutton1991dyna}, Surrogate-Assisted Evolutionary Algorithms periodically retrain the surrogate on evaluated candidates \cite{liu2024saea}, and offline Model-Based Optimization methods address distribution shift through conservative objectives or data-manifold constraints \cite{trabucco2022design,kim2024offline}. A common thread is that these methods either penalize predictive uncertainty or restrict search to the offline data manifold to avoid OOD exploitation.

\section{Co4ICF}
\label{sec:method}

We present \textbf{Co4ICF}, a closed-loop framework that jointly optimizes a pulse designer and a physics-informed surrogate.

\subsection{Pipeline}
\label{sec:method:pipeline}

\begin{figure}[htbp]
    \centering
    \includegraphics[width=\linewidth]{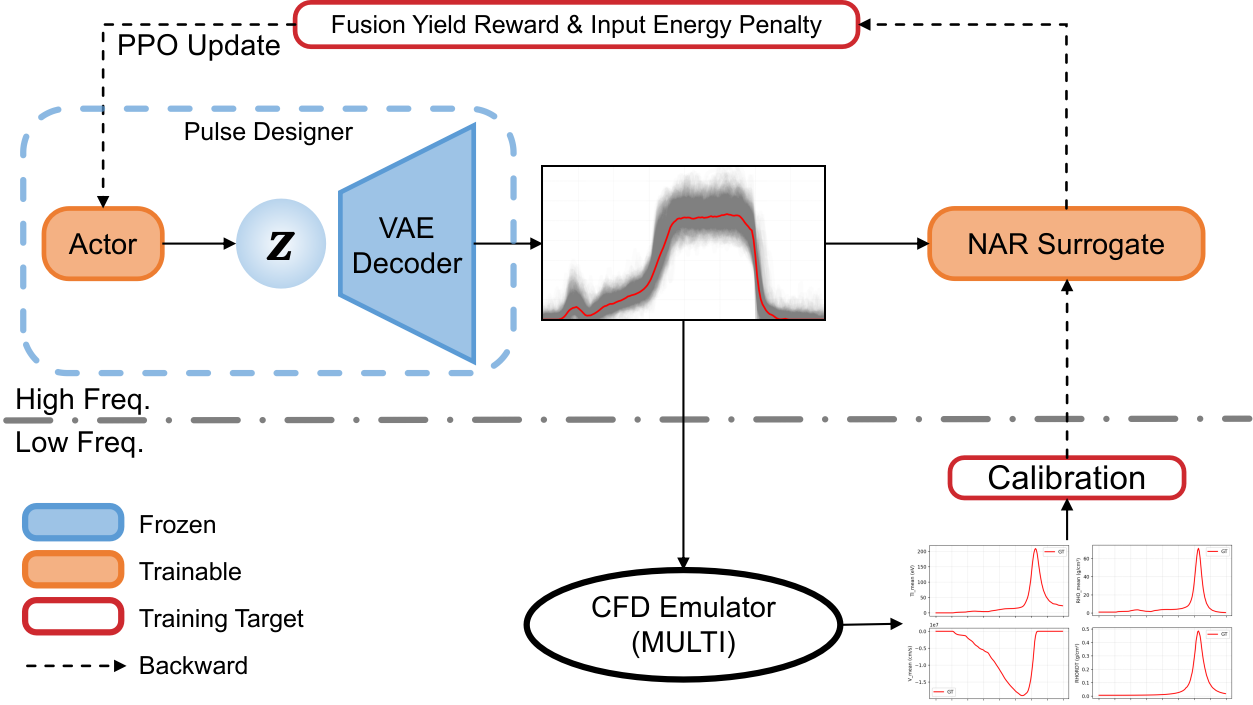}
    \caption{Co-evolving Stage of Co4ICF. The PPO actor queries the surrogate at high frequency for policy updates; every $K$ steps, on-policy pulses are evaluated with MULTI and the surrogate is fine-tuned on the augmented dataset.}
    \Description{Co-evolving optimization diagram in which an actor samples a latent code, a VAE decoder produces a laser pulse, a NAR surrogate evaluates the pulse, and periodic simulator feedback updates the surrogate.}
    \label{fig:coevolve}
\end{figure}

\paragraph{Overview.}
Co4ICF is a closed-loop framework with a \emph{pulse designer} and a \emph{physics-informed surrogate} (Fig.~\ref{fig:coevolve}).
Let $\mathbf{t}\in\mathbb{R}^{d_t}$ denote static target/capsule parameters and $\mathbf{p}\in\mathbb{R}^{T}$ denote a laser power waveform discretized on a fixed grid.
The \emph{designer} operates in a low-dimensional latent action space: an \emph{actor}, or policy network $\pi_{\omega}(\cdot\mid \mathbf{t})$ outputs a distribution over latent codes $\mathbf{z}\in\mathbb{R}^{d_z}$, and a pretrained \emph{decoder} $\mathbf{D}_{\psi}$ maps $\mathbf{z}$ to a feasible waveform $\mathbf{p}=\mathbf{D}_{\psi}(\mathbf{z})$.
Given $(\mathbf{t},\mathbf{p})$, the \emph{surrogate} $S_{\theta}$ predicts time-resolved trajectories $\hat{\mathbf{y}}=S_{\theta}(\mathbf{t},\mathbf{p})$ from which we compute a scalar optimization objective $r=R(\hat{\mathbf{y}})$ that encodes ignition reward together with feasibility/energy penalties.
The designer proposes candidate pulses and the surrogate provides fast, physically consistent evaluations. To mitigate OOD drift and reward hacking, we interleave \emph{high-frequency} actor updates with \emph{low-frequency} surrogate refinement on newly queried on-policy trajectories.

\paragraph{Stage~I: offline pretraining.}
We use offline MULTI data $\mathcal{D}_0$.
Each sample contains target parameters, a laser pulse, and a MULTI trajectory: $(\mathbf{t}_i,\mathbf{p}_i,\mathbf{y}_i)$ (Section~\ref{sec:method:dataset}).
First, we pretrain the VAE on a broad pulse subset to learn a feasible pulse manifold, and retain its decoder $\mathbf{D}_{\psi}$ as a fixed mapping from the latent prior to waveform space. This converts waveform search into a lower-dimensional control problem and constrains exploration to the learned manifold.
In parallel, we pretrain the surrogate $S_{\theta}$ to regress MULTI trajectories.
Beyond standard supervised regression, we introduce inequality-based physics regularization to discourage non-physical trajectories and improve extrapolation stability (Section~\ref{sec:method:physics_loss}).

\paragraph{Stage~II: co-evolving optimization with dual-frequency updates.}
After pretraining, we alternate between fast policy improvement against the surrogate and periodic surrogate refinement on on-policy samples.
In the high-frequency loop, we treat $S_{\theta}$ as an inexpensive environment: at each PPO iteration, the actor samples $\mathbf{z}\sim\pi_{\omega}(\cdot\mid\mathbf{t})$, decodes it into $\mathbf{p}=\mathbf{D}_{\psi}(\mathbf{z})$, queries $S_{\theta}$ for $\hat{\mathbf{y}}=S_{\theta}(\mathbf{t},\mathbf{p})$, and computes the reward $r=R(\hat{\mathbf{y}})$; PPO then updates the policy using batches of surrogate-generated rollouts.
In the low-frequency loop, we refine $S_{\theta}$ every $K$ PPO updates: we sample pulses from the current policy, re-evaluate a budgeted subset with MULTI, and fine-tune the surrogate on the union of the offline data and these newly labeled on-policy samples.

\subsection{Dataset}
\label{sec:method:dataset}

\begin{figure}[!htbp]
    \centering
      \begin{forest}
      for tree={
          grow'=east,
          parent anchor=east,
          child anchor=west,
          align=left,
          edge={-latex},
          rounded corners,
          draw,
          fill=gray!8,
          font=\small,
          l sep=10pt,
          s sep=6pt,
          inner xsep=6pt,
          inner ysep=4pt,
      }
      [Dataset
          [Real (701)
          [Measured input pulses]
          ]
          [Clean (10k)
          [VAE augmented samples]
          ]
          [Other (80k)
          [Fourier (40k): freq. noise]
          [PPCA (40k): kernel samples]
          ]
      ]
      \end{forest}
    \caption{Dataset composition. \emph{Real}: 701 measured pulses; \emph{Clean}: 10k VAE-augmented in-distribution samples; \emph{Other}: 80k OOD pulses from Fourier noise and PPCA sampling.}
    \Description{Tree diagram of the dataset split into Real measured pulses, Clean VAE-augmented samples, and Other out-of-distribution Fourier and PPCA samples.}
    \label{fig:dataset_tree}
\end{figure}

A major challenge in AI for fusion is the lack of a common benchmark.
We construct a synthesized simulation dataset that combines multiple input sampling strategies to broaden coverage while preserving physical plausibility.
The dataset contains three subsets (Fig.~\ref{fig:dataset_tree}).
The \emph{Real} subset contains measured laser pulses from ICF experiments, i.e., partially optimized solutions in a limited design space.
The \emph{Clean} subset samples a VAE trained only on \emph{Real}, yielding \emph{in-distribution} pulses on the learned manifold; this dataset VAE is distinct from the optimization VAE.
The \emph{Other} subset uses noise augmentation to cover \emph{out-of-distribution} patterns.
Each pulse is simulated with MULTI to obtain time-resolved implosion trajectories and associated physical quantities.
Details are in Appendix~\ref{sec:appendix:data}.

\subsection{Surrogate with Physics-Informed Regularization}
\label{sec:method:surrogate}
\label{sec:method:physics_loss}

The surrogate maps laser pulse inputs to time-resolved implosion trajectories, replacing slow MULTI calls with fast forward evaluation.
The input is a 160-step laser history $\mathbf{p}$ plus three static capsule parameters $\mathbf{t}$.
We concatenate them into $\mathbf{x}=\mathrm{Concat}(\mathbf{t},\mathbf{p})$.
The output contains five 160-step trajectories: fuel density, velocity, temperature, ablation-boundary index, and boundary radius.

\begin{figure}[htbp]
    \centering
    \includegraphics[width=\linewidth]{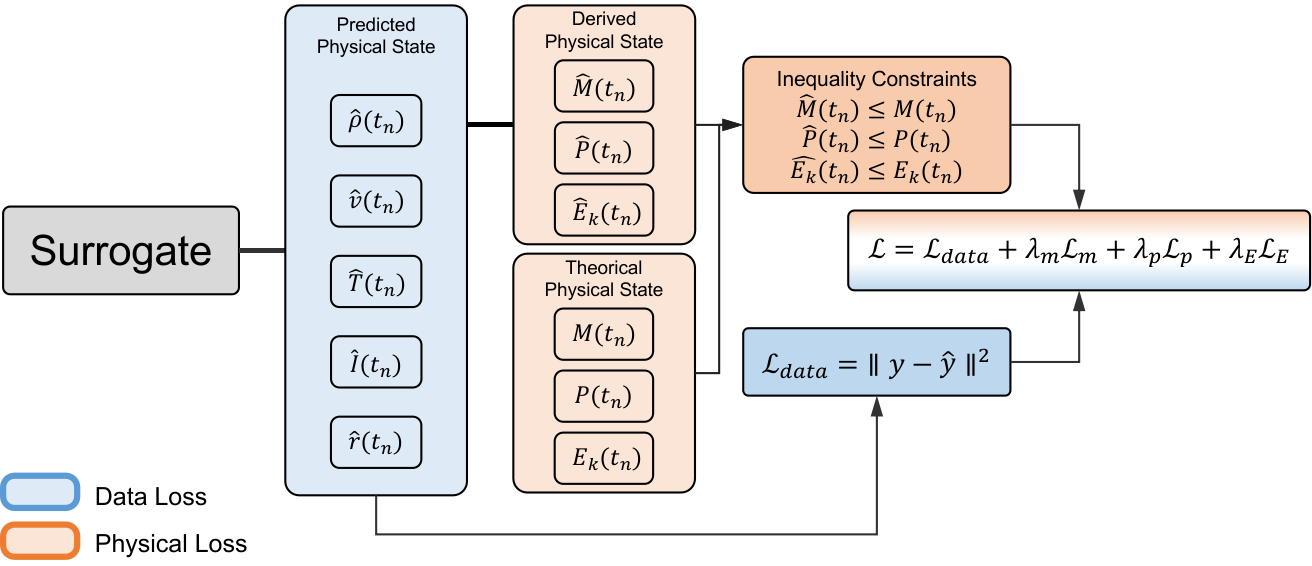}
    \caption{Physics-Informed Loss. The surrogate predicts trajectories from which boundary index, radius, and velocity are extracted. Steady-ablation budgets then yield inequality constraints on mass, momentum, and energy via squared hinge penalties.}
    \Description{Flow diagram of the physics-informed loss layer, mapping surrogate-predicted physical states to derived and theoretical mass, momentum, and energy quantities, then combining inequality penalties with data loss.}
    \label{fig:surrogate}
\end{figure}

A regression-trained surrogate can fit the data yet still violate conservation laws, especially in OOD regions sought by the optimizer.
We therefore add a differentiable \emph{Physics-Informed Loss} with inequality constraints derived from the steady ablation model.
Concretely, a differentiable mass-from-index operator converts the predicted boundary index into an enclosed-mass estimate.
Steady-ablation budgets then give conservative upper bounds on remaining mass, momentum, and energy.
Squared hinge penalties enforce these bounds.
Valid predictions are left unchanged.
We use upper bounds rather than equality constraints because the budgets are approximate: they depend on simplified models and proxy quantities.
The one-sided hinge preserves inequality semantics.
Full derivations are in Appendix~\ref{sec:appendix:physical_loss}.

\subsection{PPO-based Optimization}
\label{sec:method:ppo}

Our goal in laser pulse optimization is to design an input pulse that maximizes ignition performance while maintaining feasibility. We train the policy using Proximal Policy Optimization (PPO).

\paragraph{Target scope.}
While the VAE decoder and surrogate were pretrained on data with varying target/capsule parameters, the RL optimization uses a single, fixed target configuration.
This matches practical ICF workflows (e.g., Double-Cone Ignition), where target and laser pulses are optimized alternately; fixing the target isolates the pulse shaping problem.

\paragraph{Reward design.}
We define the reward from the Lawson criterion $\rho R T \geq 1.5\;\mathrm{g \cdot keV \cdot cm^{-2}}$ \cite{Atzeni2004PhysicsInertialFusion}. We use the peak $\rho RT$ product over the predicted trajectory as the terminal reward, plus a penalty on laser energy exceeding the experimental budget:
\begin{equation}
r(\mathbf{p}) = \max_{t \in [0, t_{\max}]} \hat \rho \hat R(t) \hat T(t) - \alpha \cdot \max(0, E_{\text{pulse}} - E_{\text{threshold}}).
\end{equation}
The training reward is a $\rho RT$ proxy rather than fusion yield itself, but the two are positively correlated on the optimized pulses: higher $\rho RT$ pushes toward ignition, where self-sustaining alpha-particle heating triggers a nonlinear surge in yield. The ``normalized yield'' reported in Table~\ref{tab:opt_comparison} is MULTI-verified yield $Y$ divided by the experimental baseline $Y_{\mathrm{base}}$, expressed as a percentage.

\paragraph{Bandit formulation and PPO objective.}
Pulse design is cast as a one-step episodic decision process: at each episode the policy $\pi_\theta$ samples a latent action $\mathbf{z}\in\mathbb{R}^{d_z}$, decodes it into a pulse $\mathbf{p}=\mathbf{D}_\psi(\mathbf{z})$ via the fixed VAE decoder, and receives reward $r(\mathbf{p})$. The objective is $\eta(\theta)=\mathbb{E}_{\mathbf{z}\sim \pi_\theta}[r(\mathbf{D}_\psi(\mathbf{z}))]$.
We optimize $\pi_\theta$ with PPO's clipped surrogate objective:
\begin{equation}
\begin{aligned}
\mathcal{L}^{\mathrm{clip}}(\theta)
&=\mathbb{E}_{\mathbf{z}\sim \pi_{\mathrm{old}}}\!\left[
\min\!\Big(
\rho_\theta(\mathbf{z})\,\hat{A}(\mathbf{z}),\right.\\
&\qquad\left.
\mathrm{clip}\!\left(\rho_\theta(\mathbf{z}),1-\epsilon,1+\epsilon\right)
\hat{A}(\mathbf{z})
\Big)\right],
\end{aligned}
\end{equation}
where $\rho_\theta(\mathbf{z})=\pi_\theta(\mathbf{z})/\pi_{\text{old}}(\mathbf{z})$ and the bandit-style advantage is $\hat{A}(\mathbf{z}) = r(\mathbf{D}_\psi(\mathbf{z})) - V_\phi(s)$ with a learned value baseline $V_\phi$.
PPO is preferable to simpler bandit methods because (1) it natively handles high-dimensional continuous action spaces, (2) the clipped objective bounds policy updates and prevents drift into adversarial OOD regions, and (3) the actor-critic architecture reduces variance. Detailed derivations and discussion are provided in Appendix~\ref{sec:appendix:objective}.

\section{Experiments}
\label{sec:experiments}

We evaluate \textbf{Co4ICF} from two perspectives: (i) surrogate accuracy and physical consistency under distribution shift, and (ii) end-to-end laser pulse optimization under a fixed MULTI budget.

\subsection{Experimental Setups}
\label{sec:exp:setup}

\paragraph{Dataset and splits}
We use the MULTI dataset from Section~\ref{sec:method:dataset}. To avoid leakage, we split by shot ID before any augmentation: the VAE and PPCA augmentation models are fit \emph{only} on the \emph{Real} training shots, and test-set shots are never used in any augmentation step. Surrogate results are reported on an ID test split (Real shots held out before augmentation) and an OOD split (Fourier-perturbed shots, generated independently of training data).

\paragraph{Surrogate pretraining and evaluation}
The designer's VAE and the surrogate are trained separately. The VAE learns a latent representation using \emph{PPCA} subset pulses, while the surrogate is pretrained on the union of \emph{Clean} and \emph{PPCA} data. We evaluate on an \textbf{ID} split (\emph{Real}) and an \textbf{OOD} split (\emph{Fourier}), measuring peak MAE and Pearson correlation.

\paragraph{Optimization protocol.}
For pulse optimization, all methods use the same fixed target configuration, 1D surrogate backend, and MULTI labeling budget. Co4ICF performs high-frequency PPO updates on the surrogate and periodically refines the surrogate with 1D-MULTI on-policy labels. Final pulse designs from all optimizers are evaluated by the same direct 2D-MULTI protocol.

\subsection{Surrogate}
\label{sec:exp:surrogate}

\begin{table*}[!t]
\centering
\begin{threeparttable}
\caption{Surrogate architecture comparison under ID and OOD evaluation.}
\label{tab:surrogate_ablation}
\small
\begin{tabular*}{\textwidth}{@{\extracolsep{\fill}}l
                c c  c c  c c
                c c  c c  c c}
\toprule
\textbf{Surrogate} &
\multicolumn{6}{c}{\textbf{ID} (\emph{Real})} &
\multicolumn{6}{c}{\textbf{OOD} (\emph{Fourier})} \\
\cmidrule(lr){2-7}\cmidrule(lr){8-13}
&
\multicolumn{2}{c}{\textbf{Pk $x$}$\downarrow$} &
\multicolumn{2}{c}{\textbf{Pk $y$}$\downarrow$} &
\multicolumn{2}{c}{\textbf{Pearson}$\uparrow$} &

\multicolumn{2}{c}{\textbf{Pk $x$}$\downarrow$} &
\multicolumn{2}{c}{\textbf{Pk $y$}$\downarrow$} &
\multicolumn{2}{c}{\textbf{Pearson}$\uparrow$} \\

\cmidrule(lr){2-3}\cmidrule(lr){4-5}\cmidrule(lr){6-7}
\cmidrule(lr){8-9}\cmidrule(lr){10-11}\cmidrule(lr){12-13}
&
\multicolumn{1}{c}{\textbf{w/}} & \multicolumn{1}{c}{\textbf{w/o}} &
\multicolumn{1}{c}{\textbf{w/}} & \multicolumn{1}{c}{\textbf{w/o}} &
\multicolumn{1}{c}{\textbf{w/}} & \multicolumn{1}{c}{\textbf{w/o}} &
\multicolumn{1}{c}{\textbf{w/}} & \multicolumn{1}{c}{\textbf{w/o}} &
\multicolumn{1}{c}{\textbf{w/}} & \multicolumn{1}{c}{\textbf{w/o}} &
\multicolumn{1}{c}{\textbf{w/}} & \multicolumn{1}{c}{\textbf{w/o}} \\
\midrule

\multicolumn{13}{l}{\textit{Non-autoregressive (NAR)}} \\
\midrule

Vanilla MLP &
\cellcolor{red!15}2.628 & 2.316 &
\cellcolor{green!18}{\underline{0.078}} & {\underline{0.089}} &
\cellcolor{red!15}{\underline{0.971}} & {\underline{0.998}} &
\cellcolor{green!18}9.094 & 10.528 &
\cellcolor{green!18}0.273 & 0.386 &
\cellcolor{green!18}{\underline{0.925}} & {\underline{0.899}} \\
ResMLP &
\cellcolor{green!18}7.431 & 7.922 &
\cellcolor{green!18}0.383 & 0.469 &
\cellcolor{red!15}0.930 & 0.958 &
\cellcolor{green!18}25.798 & 32.128 &
\cellcolor{green!18}1.118 & 1.131 &
\cellcolor{green!18}0.636 & 0.558 \\
1DConv &
\cellcolor{red!15}4.614 & 2.276 &
\cellcolor{red!15}0.181 & 0.095 &
\cellcolor{red!15}0.961 & 0.996 &
\cellcolor{red!15}9.434 & 9.126 &
\cellcolor{green!18}0.431 & 0.482 &
\cellcolor{green!18}0.901 & 0.869 \\
Transformer Enc. &
\cellcolor{red!15}{\underline{2.450}} & {\underline{2.194}} &
\cellcolor{green!18}0.083 & 0.111 &
\cellcolor{red!15}0.969 & 0.997 &
\cellcolor{green!18}{\underline{6.302}} & {\underline{8.020}} &
\cellcolor{green!18}{\underline{0.252}} & {\underline{0.310}} &
\cellcolor{green!18}0.912 & 0.855 \\
\midrule

\multicolumn{13}{l}{\textit{Autoregressive (AR)}} \\
\midrule

Transformer Dec. &
{N/A} & 8.318 &
{N/A} & 0.502 &
{N/A} & 0.762 &
{N/A} & 36.875 &
{N/A} & 1.966 &
{N/A} & 0.451 \\
\bottomrule
\end{tabular*}

\begin{tablenotes}[flushleft]\footnotesize
\item \textbf{w/} and \textbf{w/o}: with / without physics-informed constraints (Section~\ref{sec:method:physics_loss} \& Appendix~\ref{sec:appendix:physical_loss}).
Notably, we do not apply physical constraints to the AR model, since the constraint formulation is not directly comparable between AR and NAR predictors.
\item \textbf{Cell color}: applied to \textbf{w/} only; green = better than w/o, red = worse than w/o (following $\downarrow/\uparrow$).
\item \textbf{Pk $x/y$}: peak $(x,y)$ MAE of the fusion-grain peak location. 
\item \textbf{\underline{Underlined} values}: the best in the column.
\end{tablenotes}
\end{threeparttable}
\end{table*}

Table~\ref{tab:surrogate_ablation} summarizes surrogate performance on ID and OOD splits.

\paragraph{Physics constraints improve OOD}
Adding physics-informed inequality constraints consistently improves generalization on the harder \emph{Fourier} split across NAR backbones.
For the Transformer encoder, OOD peak-$x$ MAE drops from $8.020$ to $6.302$.
Peak-$y$ MAE drops from $0.310$ to $0.252$, and Pearson correlation rises from $0.855$ to $0.912$.
There is a mild trade-off on the ID \emph{Real} split, where conservative bounds slightly reduce in-distribution accuracy; but OOD robustness improves considerably.
The trend holds across architectures.
Physical regularization suppresses non-physical extrapolations and reduces error amplification under optimizer-induced OOD inputs.
Fig.~\ref{fig:surrogate_pretrain_loss_ratio} further tracks the $L_2$ loss ratio during surrogate pretraining.
The curves show that physics regularization stabilizes training and mitigates overfitting, yielding representations that transfer better to unseen pulses.

\begin{figure}[!htbp]
    \centering
    \includegraphics[width=\linewidth]{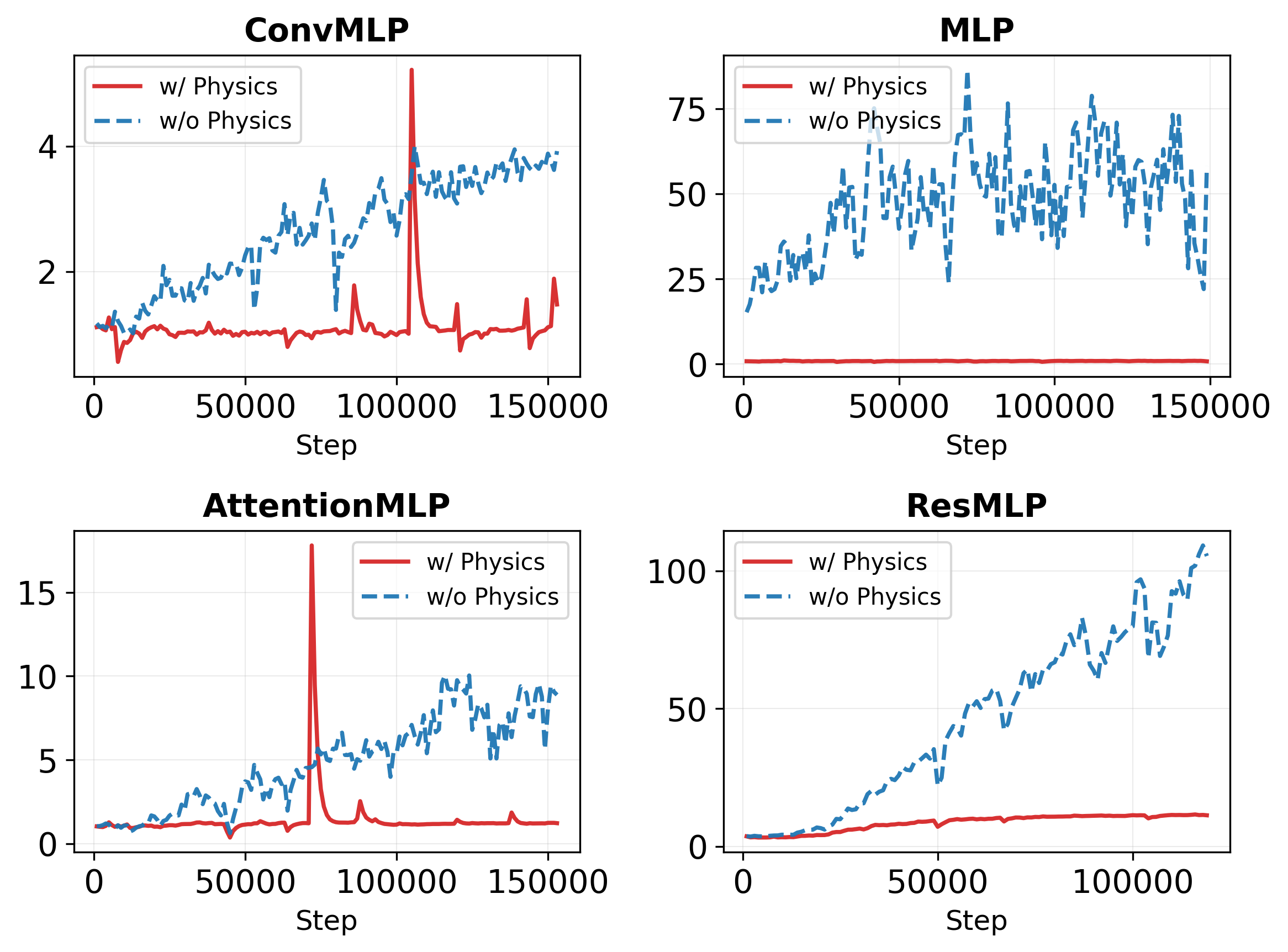}
    \caption{$L_2$ loss ratio during surrogate pretraining.}
    \Description{Four-panel training-curve figure comparing models with and without physics regularization; the physics-regularized curves remain lower and more stable than the non-regularized baselines in most panels.}
    \label{fig:surrogate_pretrain_loss_ratio}
\end{figure}

\paragraph{Transformer encoder is the strongest NAR surrogate.}
It offers the strongest balance between accuracy and robustness among NAR models.
With physics constraints, it has the lowest OOD peak-$x$ MAE ($6.302$), the lowest OOD peak-$y$ MAE ($0.252$), and high OOD Pearson correlation ($0.912$).
Given the short horizon ($T{=}160$) and the need for high throughput, we use it as the default surrogate backend.

\paragraph{NAR vs.\ AR comparison.}
We compare with an AR decoder.
The AR Transformer underperforms on both splits.
Although autoregressive decoding respects temporal causality, it is more sensitive to compounding rollout errors.
Its lower throughput also limits practical training and tuning (Table~\ref{tab:speedup_comparison}).
Our explanation is that the simulator context has only $160$ time steps, which is short relative to typical long-horizon sequence tasks.
With limited context, step-wise temporal modeling helps less, while sequential rollouts still accumulate errors and increase latency.
This agrees with prior forecasting work~\cite{zeng2022transformerseffectivetimeseries}.
In such settings, simple linear or MLP-style baselines can match Transformer variants on many benchmarks.

\paragraph{Surrogate inference speedup.}
Table~\ref{tab:speedup_comparison} reports timing for 1,000 forward evaluations.
MULTI takes 319.83~s on 16 CPU threads.
On CPU, the Transformer encoder takes 14.76~s (21.67$\times$ speedup).
On GPU, it takes 4.58~s (69.83$\times$ speedup).
The Transformer decoder is slower than MULTI on CPU (561.67~s) and only marginally faster on GPU (216.32~s).
The NAR encoder is therefore the most efficient evaluation engine for iterative pulse search.

\begin{table}[!htbp]
\centering
\begin{threeparttable}
\caption{Wall-clock time for 1k forward evaluations.}
\label{tab:speedup_comparison}
\begin{tabular}{l cc}
\toprule
\textbf{Backend} &
\textbf{Time (s)}$\downarrow$ &
\textbf{Speedup} \\
\midrule
Transformer Encoder (CPU) & 14.76  & 21.67 \\
Transformer Encoder (GPU) & {\bfseries 4.58}   & {\bfseries 69.83}\\
Transformer Decoder (CPU) & 561.67 & 0.57  \\
Transformer Decoder (GPU) & 216.32 & 1.48  \\
\midrule
MULTI (CPU)               & 319.83 & 1.00  \\
\bottomrule
\end{tabular}
\begin{tablenotes}[flushleft]\footnotesize
\item Measurements use Ubuntu 24.04 LTS, an AMD Ryzen 9 7940H CPU (16 threads), and an NVIDIA GeForce RTX 4060 Max-Q GPU (8 GB). MULTI uses 16 CPU threads.
\item Speedup is computed as $\text{Time}(\text{MULTI CPU}) / \text{Time}(\text{Backend})$.
\end{tablenotes}
\end{threeparttable}
\end{table}

\subsection{Co-evolving Optimization}

\subsubsection{Main Performance}

We first test whether surrogate refinement improves the 1D pulse-optimization trajectory.
In Fig.~\ref{fig:teaser_ppo}a, periodic refinement reaches 146.1\% after 1D-MULTI re-evaluation.
The frozen-surrogate variant plateaus at 115.4\%.
This gap indicates that updating the surrogate on on-policy samples helps after the policy moves away from the offline training distribution.
We use this 1D trajectory as the main diagnostic in this subsection; final designs are evaluated by direct 2D-MULTI in Section~\ref{sec:exp:coevolving:baselines}.

\begin{figure}[!htbp]
    \centering
    \includegraphics[width=\linewidth]{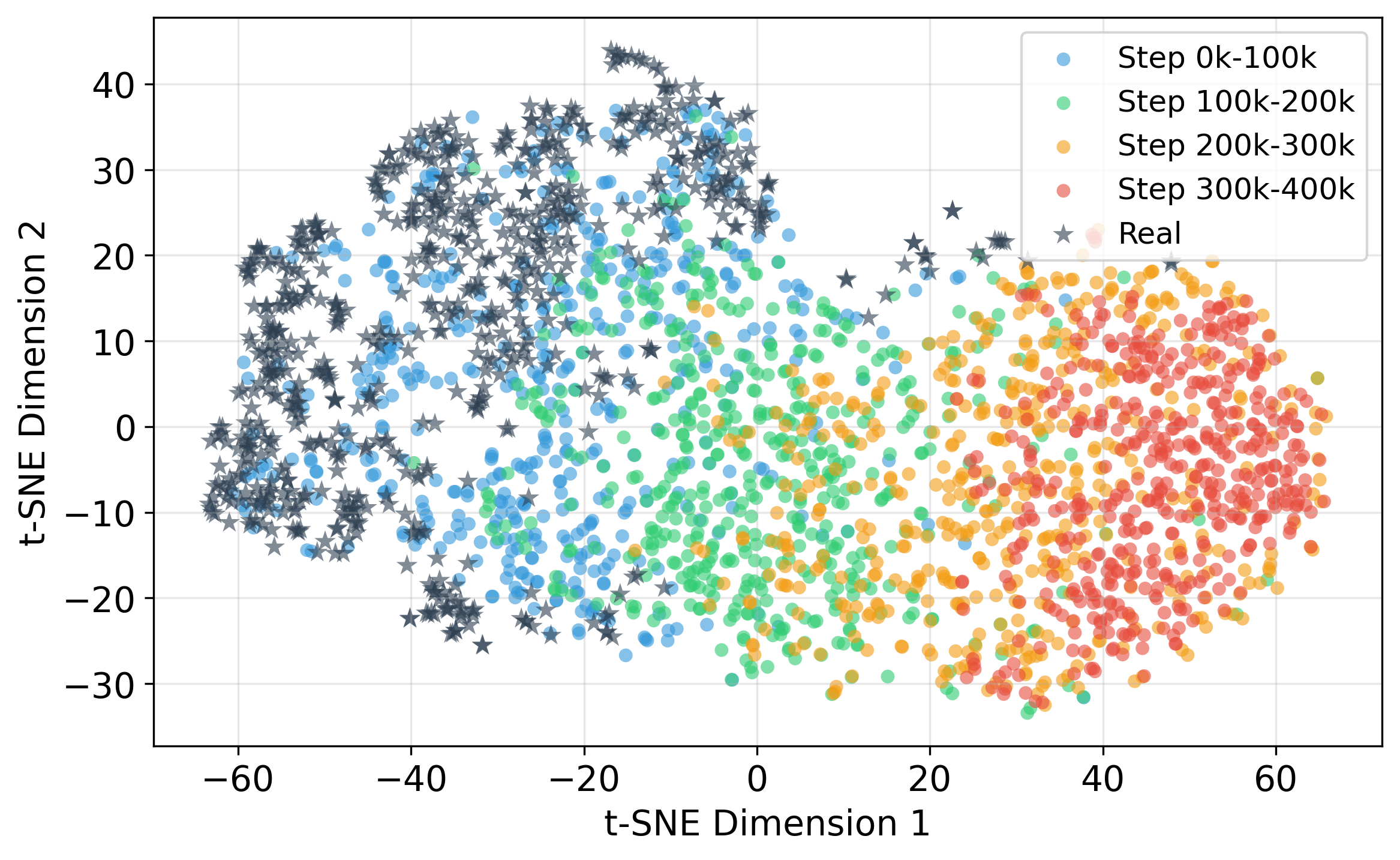}
    \caption{t-SNE visualization of laser pulses during Co4ICF optimization.
    Colored dots: policy-sampled pulses at different stages (0--100k, 100k--200k, 200k--300k, 300k--400k); gray stars: experimental pulses.
    The policy progressively moves beyond the experimental pulse cluster in the embedding space.}
    \Description{t-SNE scatter plot where real pulses appear as gray stars on the left and policy-generated pulses move from blue and green early-stage samples toward orange and red later-stage samples on the right.}
    \label{fig:tsne_coevolve}
\end{figure}

The optimized waveforms provide a qualitative check that the improvement is not driven by isolated reward spikes. Fig.~\ref{fig:teaser_ppo}b compares randomly initialized pulse samples with final policy samples. After RL with surrogate refinement, the designer produces smoother ramp--plateau--shutoff structures among the optimized samples, which is consistent with the higher 1D-MULTI reward curve in Fig.~\ref{fig:teaser_ppo}a.

The policy also moves beyond the measured-pulse cluster within the decoder-defined search space.
Fig.~\ref{fig:tsne_coevolve} embeds policy-sampled 160-step pulses with PCA followed by t-SNE.
Early samples remain close to the Real cluster, whereas later samples expand into broader embedding regions.
This progression suggests that the actor moves beyond imitation while staying constrained by the learned decoder.

\paragraph{Policy training speedup.}
Runtime drops sharply in the PPO loop.
Across 3,967 PPO iterations, MULTI-based training takes 19,800\,s.
The surrogate loop takes 20\,s, giving a \textbf{990$\times$} inner-loop speedup.
This timing covers PPO policy updates only.
It excludes offline data generation, surrogate pretraining, low-frequency relabeling, and 2D-MULTI verification.
Overall speedup depends on cycle count and labeling budget $M$.
The main saving is replacing per-iteration MULTI calls with surrogate queries.

\subsubsection{Direct 2D Evaluation and Ablation}
\label{sec:exp:coevolving:baselines}

We next test whether 1D-optimized pulses remain strong under direct 2D-MULTI evaluation.
All optimizers use the same 1D surrogate and MULTI labeling budget during search.
2D-MULTI is used only for post-hoc evaluation of final designs.
Thus, no 2D samples are used to train, fine-tune, or select the PPO policy.

\begin{table}[!htbp]
\centering
\caption{Direct 2D-MULTI evaluation of final pulse designs under the same 1D surrogate and MULTI labeling budget. All reported physical quantities and normalized yields are computed by 2D-MULTI, which is used only for post-hoc evaluation. Normalized Yield is the 2D-MULTI fusion yield divided by the baseline pulse's 2D-MULTI yield.}
\label{tab:opt_comparison}
\small
\begin{tabular}{l cccc}
\toprule
\textbf{Case} & $\bar{\rho}$ & $\rho R$ & $T$ & \textbf{Norm. Yield}$\uparrow$ \\
\midrule
Baseline & 7.60 & 0.907 & 163.63 & 100.0\% \\
BO & 7.20 & 1.100 & 233.98 & 173.5\% \\
GA & 7.30 & 0.829 & 218.10 & 121.9\% \\
W/O Update (Static) & 6.52 & 0.482 & 388.72 & 126.3\% \\
W/O Update (Enlarged) & 6.49 & 0.512 & 422.09 & 174.8\% \\
W/ Update (Co4ICF) & \textbf{7.10} & \textbf{1.063} & 344.60 & \textbf{246.9\%} \\
\bottomrule
\end{tabular}
\end{table}

Table~\ref{tab:opt_comparison} shows that Co4ICF achieves the highest normalized 2D yield: 246.9\%, compared with 173.5\% for BO, 121.9\% for GA, and 126.3\% for frozen-surrogate PPO.
The table also gives a budget-matched ablation for the co-evolving surrogate.
W/O Update (Enlarged) retrains the frozen surrogate with the same number of extra samples as Co4ICF, but draws them from the original data distribution instead of the current policy.
This enlarged frozen surrogate improves over W/O Update (Static), from 126.3\% to 174.8\%, but remains well below W/ Update (Co4ICF).
Together, these comparisons suggest that the gain comes from coupling optimization with on-policy surrogate refinement, rather than from optimizer choice or extra data alone.

\subsubsection{Distribution-Shift Diagnostics}
\label{sec:dist_shift}

The budget-matched ablation shows the final effect of on-policy refinement. To diagnose this mechanism, we track the peak error between surrogate predictions and MULTI-evaluated trajectories during PPO updates. This diagnostic measures whether the surrogate remains reliable as the policy shifts the input distribution away from the offline training set.
\begin{figure}[!htbp]
    \centering
    \includegraphics[width=\linewidth]{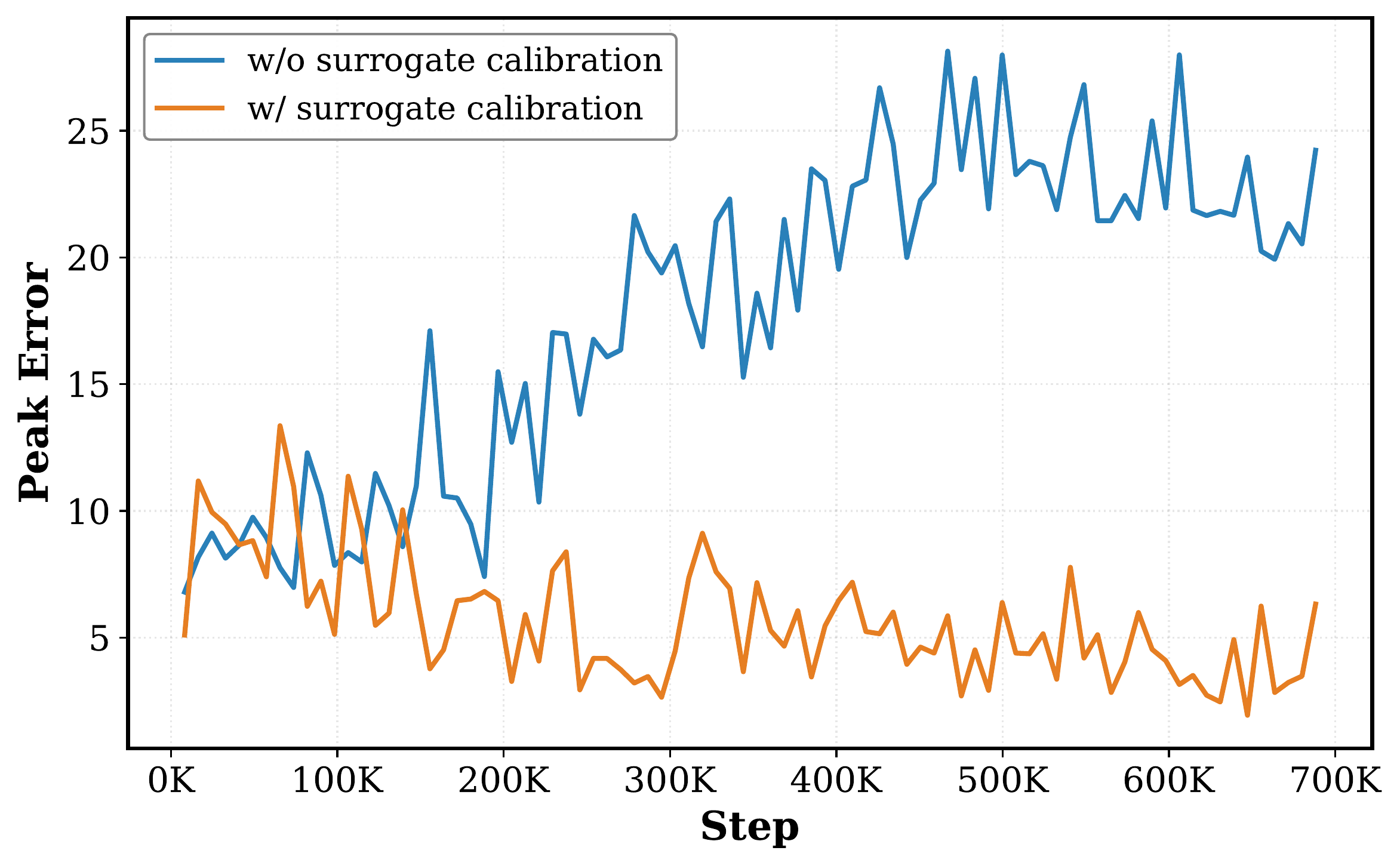}
    \caption{Surrogate--MULTI peak error over PPO training steps. Without calibration, the error grows unbounded as the policy drives inputs OOD; with periodic calibration, the error stays bounded.}
    \Description{Line plot comparing surrogate-MULTI peak error over PPO training steps: the uncalibrated curve climbs from the single digits into the twenties, while the calibrated curve stays mostly below ten.}
    \label{fig:peak_error}
\end{figure}

Fig.~\ref{fig:peak_error} shows that the frozen surrogate becomes increasingly unreliable during optimization.
Its peak error grows and eventually spikes, indicating optimizer-induced distribution shift.
Periodic calibration keeps the surrogate--MULTI gap bounded, explaining why W/ Update outperforms both W/O Update variants in Table~\ref{tab:opt_comparison}.

\section{Conclusions}
\label{sec:conclusion}

We presented \textbf{Co4ICF}, a co-evolving framework that couples a physics-informed surrogate with an RL-based pulse designer for ICF design under limited high-fidelity budgets.
The dual-frequency loop combines high-frequency PPO updates with low-frequency surrogate fine-tuning to correct local extrapolation errors and reduce reward hacking.
Co4ICF achieves 246.9\% normalized yield in direct 2D-MULTI evaluation relative to the experimental baseline.
It does so without 2D training or fine-tuning, while retaining a 990$\times$ inner-loop speedup.
Budget-matched ablations indicate that these gains are not explained solely by additional simulation data.

More broadly, Co4ICF suggests a recipe for simulation-informed design.
Treat optimizer-induced shift as a training signal, and keep the learned simulator aligned with decision-relevant regions. 

\section{Limitations and Broader Impact}

\paragraph{Limitations}
The framework relies on MULTI-IFE (1D \& 2D) for high-fidelity feedback \cite{ramis2016multi}, so it cannot capture asymmetric instabilities (e.g., Rayleigh--Taylor) that matter in real 3D implosions. Search is constrained to the VAE manifold, which keeps pulses feasible but risks missing physically valid shapes outside it.

\paragraph{Broader impact.}
The co-evolving paradigm is not specific to ICF and may transfer to other domains where iterative optimization interacts with a learned model (e.g., molecular design, materials discovery). 
However, ICF research inherently bridges civilian fusion energy and weapons physics; methodologies for optimizing implosion performance could, in principle, be applied to weapons-relevant contexts. 
Researchers adapting this paradigm should adhere to relevant export control and non-proliferation frameworks.


\begin{acks}
This work was supported by the AI for Science Program, Shanghai Municipal Commission of Economy and Informatization (2025-GZL-RGZN-BTBX-02024), and National Natural Science Foundation of China Program (125B100032, 92370201).
\end{acks}

\balance
\bibliographystyle{ACM-Reference-Format}
\bibliography{sample-base}


\begin{thebibliography}{25}


\ifx \showCODEN    \undefined \def \showCODEN     #1{\unskip}     \fi
\ifx \showISBNx    \undefined \def \showISBNx     #1{\unskip}     \fi
\ifx \showISBNxiii \undefined \def \showISBNxiii  #1{\unskip}     \fi
\ifx \showISSN     \undefined \def \showISSN      #1{\unskip}     \fi
\ifx \showLCCN     \undefined \def \showLCCN      #1{\unskip}     \fi
\ifx \shownote     \undefined \def \shownote      #1{#1}          \fi
\ifx \showarticletitle \undefined \def \showarticletitle #1{#1}   \fi
\ifx \showURL      \undefined \def \showURL       {\relax}        \fi
\providecommand\bibfield[2]{#2}
\providecommand\bibinfo[2]{#2}
\providecommand\natexlab[1]{#1}
\providecommand\showeprint[2][]{arXiv:#2}

\bibitem[Anirudh et~al\mbox{.}(2020)]%
        {anirudh2020improved}
\bibfield{author}{\bibinfo{person}{Rushil Anirudh},
  \bibinfo{person}{Jayaraman~J Thiagarajan}, \bibinfo{person}{Peer-Timo
  Bremer}, {and} \bibinfo{person}{Brian~K Spears}.}
  \bibinfo{year}{2020}\natexlab{}.
\newblock \showarticletitle{Improved surrogates in inertial confinement fusion
  with manifold and cycle consistencies}.
\newblock \bibinfo{journal}{\emph{Proceedings of the National Academy of
  Sciences}} \bibinfo{volume}{117}, \bibinfo{number}{18}
  (\bibinfo{year}{2020}), \bibinfo{pages}{9741--9746}.
\newblock


\bibitem[Atzeni and ter Vehn(2004)]%
        {Atzeni2004PhysicsInertialFusion}
\bibfield{author}{\bibinfo{person}{Stefano Atzeni} {and}
  \bibinfo{person}{J{\"u}rgen~Meyer ter Vehn}.}
  \bibinfo{year}{2004}\natexlab{}.
\newblock \bibinfo{booktitle}{\emph{The Physics of Inertial Fusion:
  Beam--Plasma Interaction, Hydrodynamics, Hot Dense Matter}}.
\newblock \bibinfo{publisher}{Oxford University Press},
  \bibinfo{address}{Oxford, UK}.
\newblock
\showISBNx{9780198562641}
\href{https://doi.org/10.1093/acprof:oso/9780198562641.001.0001}{doi:\nolinkurl{10.1093/acprof:oso/9780198562641.001.0001}}


\bibitem[Capuano et~al\mbox{.}(2025)]%
        {capuano2025shaping}
\bibfield{author}{\bibinfo{person}{Francesco Capuano}, \bibinfo{person}{Davorin
  Peceli}, {and} \bibinfo{person}{Gabriele Tiboni}.}
  \bibinfo{year}{2025}\natexlab{}.
\newblock \showarticletitle{Shaping Laser Pulses with Reinforcement Learning}.
\newblock \bibinfo{journal}{\emph{arXiv preprint arXiv:2503.00499}}
  (\bibinfo{year}{2025}).
\newblock


\bibitem[Degrave et~al\mbox{.}(2022)]%
        {degrave2022magnetic}
\bibfield{author}{\bibinfo{person}{Jonas Degrave}, \bibinfo{person}{Federico
  Felici}, \bibinfo{person}{Jonas Buchli}, \bibinfo{person}{Michael Neunert},
  \bibinfo{person}{Brendan Tracey}, \bibinfo{person}{Francesco Carpanese},
  \bibinfo{person}{Timo Ewalds}, \bibinfo{person}{Roland Hafner},
  \bibinfo{person}{Abbas Abdolmaleki}, \bibinfo{person}{Diego de Las~Casas},
  {et~al\mbox{.}}} \bibinfo{year}{2022}\natexlab{}.
\newblock \showarticletitle{Magnetic control of tokamak plasmas through deep
  reinforcement learning}.
\newblock \bibinfo{journal}{\emph{Nature}} \bibinfo{volume}{602},
  \bibinfo{number}{7897} (\bibinfo{year}{2022}), \bibinfo{pages}{414--419}.
\newblock


\bibitem[Ejaz et~al\mbox{.}(2024)]%
        {ejaz2024deep}
\bibfield{author}{\bibinfo{person}{Rahman Ejaz}, \bibinfo{person}{Varchas
  Gopalaswamy}, \bibinfo{person}{A Lees}, \bibinfo{person}{C Kanan},
  \bibinfo{person}{D Cao}, {and} \bibinfo{person}{R Betti}.}
  \bibinfo{year}{2024}\natexlab{}.
\newblock \showarticletitle{Deep learning-based predictive models for laser
  direct drive at the Omega Laser Facility}.
\newblock \bibinfo{journal}{\emph{Physics of Plasmas}} \bibinfo{volume}{31},
  \bibinfo{number}{5} (\bibinfo{year}{2024}).
\newblock


\bibitem[Gopalaswamy et~al\mbox{.}(2025)]%
        {gopalaswamy2025automated}
\bibfield{author}{\bibinfo{person}{V Gopalaswamy}, \bibinfo{person}{A Lees},
  \bibinfo{person}{R Ejaz}, \bibinfo{person}{CA Thomas}, \bibinfo{person}{TJB
  Collins}, \bibinfo{person}{KS Anderson}, \bibinfo{person}{W Ebmeyer}, {and}
  \bibinfo{person}{R Betti}.} \bibinfo{year}{2025}\natexlab{}.
\newblock \showarticletitle{Automated and highly parallelized Bayesian
  optimization scheme for direct drive fusion experiments on OMEGA}.
\newblock \bibinfo{journal}{\emph{Physical Review Research}}
  \bibinfo{volume}{7}, \bibinfo{number}{1} (\bibinfo{year}{2025}),
  \bibinfo{pages}{013009}.
\newblock


\bibitem[Humbird(2019)]%
        {humbird2019machine}
\bibfield{author}{\bibinfo{person}{Kelli~Denise Humbird}.}
  \bibinfo{year}{2019}\natexlab{}.
\newblock \emph{\bibinfo{title}{Machine learning guided discovery and design
  for inertial confinement fusion}}.
\newblock \bibinfo{thesistype}{Ph.\,D. Dissertation}. \bibinfo{school}{Texas
  A\&M University}.
\newblock


\bibitem[Humbird et~al\mbox{.}(2021)]%
        {humbird2021cognitive}
\bibfield{author}{\bibinfo{person}{Kelli~D Humbird}, \bibinfo{person}{J~Luc
  Peterson}, \bibinfo{person}{J Salmonson}, {and} \bibinfo{person}{Brian~K
  Spears}.} \bibinfo{year}{2021}\natexlab{}.
\newblock \showarticletitle{Cognitive simulation models for inertial
  confinement fusion: Combining simulation and experimental data}.
\newblock \bibinfo{journal}{\emph{Physics of Plasmas}} \bibinfo{volume}{28},
  \bibinfo{number}{4} (\bibinfo{year}{2021}).
\newblock


\bibitem[Hurricane(2024)]%
        {hurricane2024ignition}
\bibfield{author}{\bibinfo{person}{Omar~A Hurricane}.}
  \bibinfo{year}{2024}\natexlab{}.
\newblock \showarticletitle{How ignition and target gain> 1 were achieved in
  inertial fusion}.
\newblock \bibinfo{journal}{\emph{High Energy Density Physics}}
  \bibinfo{volume}{53} (\bibinfo{year}{2024}), \bibinfo{pages}{101157}.
\newblock


\bibitem[Karniadakis et~al\mbox{.}(2021)]%
        {karniadakis2021physics}
\bibfield{author}{\bibinfo{person}{George~Em Karniadakis},
  \bibinfo{person}{Ioannis~G Kevrekidis}, \bibinfo{person}{Lu Lu},
  \bibinfo{person}{Paris Perdikaris}, \bibinfo{person}{Sifan Wang}, {and}
  \bibinfo{person}{Liu Yang}.} \bibinfo{year}{2021}\natexlab{}.
\newblock \showarticletitle{Physics-informed machine learning}.
\newblock \bibinfo{journal}{\emph{Nature Reviews Physics}} \bibinfo{volume}{3},
  \bibinfo{number}{6} (\bibinfo{year}{2021}), \bibinfo{pages}{422--440}.
\newblock


\bibitem[Kim et~al\mbox{.}(2025)]%
        {kim2024offline}
\bibfield{author}{\bibinfo{person}{Minsu Kim}, \bibinfo{person}{Jiayao Gu},
  \bibinfo{person}{Ye Yuan}, \bibinfo{person}{Taeyoung Yun},
  \bibinfo{person}{Zixuan Liu}, \bibinfo{person}{Yoshua Bengio}, {and}
  \bibinfo{person}{Can Chen}.} \bibinfo{year}{2025}\natexlab{}.
\newblock \bibinfo{title}{Offline Model-Based Optimization: Comprehensive
  Review}.
\newblock
\showeprint[arxiv]{2503.17286}~[cs.LG]
\urldef\tempurl%
\url{https://arxiv.org/abs/2503.17286}
\showURL{%
\tempurl}


\bibitem[Li et~al\mbox{.}(2023)]%
        {li2023hybrid}
\bibfield{author}{\bibinfo{person}{Z Li}, \bibinfo{person}{ZQ Zhao},
  \bibinfo{person}{XH Yang}, \bibinfo{person}{GB Zhang}, \bibinfo{person}{YY
  Ma}, \bibinfo{person}{H Xu}, \bibinfo{person}{FY Wu}, \bibinfo{person}{FQ
  Shao}, {and} \bibinfo{person}{J Zhang}.} \bibinfo{year}{2023}\natexlab{}.
\newblock \showarticletitle{Hybrid optimization of laser-driven fusion targets
  and laser profiles}.
\newblock \bibinfo{journal}{\emph{Plasma Physics and Controlled Fusion}}
  \bibinfo{volume}{66}, \bibinfo{number}{1} (\bibinfo{year}{2023}),
  \bibinfo{pages}{015010}.
\newblock


\bibitem[Liu et~al\mbox{.}(2024)]%
        {liu2024saea}
\bibfield{author}{\bibinfo{person}{Shulei Liu}, \bibinfo{person}{Handing Wang},
  \bibinfo{person}{Wei Peng}, {and} \bibinfo{person}{Wen Yao}.}
  \bibinfo{year}{2024}\natexlab{}.
\newblock \showarticletitle{Surrogate-assisted evolutionary algorithms for
  expensive combinatorial optimization: a survey}.
\newblock \bibinfo{journal}{\emph{Complex \& Intelligent Systems}}
  \bibinfo{volume}{10}, \bibinfo{number}{4} (\bibinfo{date}{Aug.}
  \bibinfo{year}{2024}), \bibinfo{pages}{5933--5949}.
\newblock
\showISSN{2198-6053}
\href{https://doi.org/10.1007/s40747-024-01465-5}{doi:\nolinkurl{10.1007/s40747-024-01465-5}}


\bibitem[Olson et~al\mbox{.}(2024)]%
        {olson2024transformer}
\bibfield{author}{\bibinfo{person}{Matthew~L Olson}, \bibinfo{person}{Shusen
  Liu}, \bibinfo{person}{Jayaraman~J Thiagarajan}, \bibinfo{person}{Bogdan
  Kustowski}, \bibinfo{person}{Weng-Keen Wong}, {and} \bibinfo{person}{Rushil
  Anirudh}.} \bibinfo{year}{2024}\natexlab{}.
\newblock \showarticletitle{Transformer-powered surrogates close the ICF
  simulation-experiment gap with extremely limited data}.
\newblock \bibinfo{journal}{\emph{Machine Learning: Science and Technology}}
  \bibinfo{volume}{5}, \bibinfo{number}{2} (\bibinfo{year}{2024}),
  \bibinfo{pages}{025054}.
\newblock


\bibitem[Raissi et~al\mbox{.}(2019)]%
        {raissi2019physics}
\bibfield{author}{\bibinfo{person}{Maziar Raissi}, \bibinfo{person}{Paris
  Perdikaris}, {and} \bibinfo{person}{George~E Karniadakis}.}
  \bibinfo{year}{2019}\natexlab{}.
\newblock \showarticletitle{Physics-informed neural networks: A deep learning
  framework for solving forward and inverse problems involving nonlinear
  partial differential equations}.
\newblock \bibinfo{journal}{\emph{Journal of Computational physics}}
  \bibinfo{volume}{378} (\bibinfo{year}{2019}), \bibinfo{pages}{686--707}.
\newblock


\bibitem[Ramis and Meyer-ter Vehn(2016)]%
        {ramis2016multi}
\bibfield{author}{\bibinfo{person}{Rafael Ramis} {and}
  \bibinfo{person}{J{\"u}rgen Meyer-ter Vehn}.}
  \bibinfo{year}{2016}\natexlab{}.
\newblock \showarticletitle{MULTI-IFE—A one-dimensional computer code for
  Inertial Fusion Energy (IFE) target simulations}.
\newblock \bibinfo{journal}{\emph{Computer Physics Communications}}
  \bibinfo{volume}{203} (\bibinfo{year}{2016}), \bibinfo{pages}{226--237}.
\newblock


\bibitem[Spears et~al\mbox{.}(2025)]%
        {spears2025predicting}
\bibfield{author}{\bibinfo{person}{Brian~K Spears}, \bibinfo{person}{Scott
  Brandon}, \bibinfo{person}{Dan~T Casey}, \bibinfo{person}{John~E Field},
  \bibinfo{person}{Jim~A Gaffney}, \bibinfo{person}{Kelli~D Humbird},
  \bibinfo{person}{Andrea~L Kritcher}, \bibinfo{person}{Michael~KG Kruse},
  \bibinfo{person}{Eugene Kur}, \bibinfo{person}{Bogdan Kustowski},
  {et~al\mbox{.}}} \bibinfo{year}{2025}\natexlab{}.
\newblock \showarticletitle{Predicting fusion ignition at the National Ignition
  Facility with physics-informed deep learning}.
\newblock \bibinfo{journal}{\emph{Science}} \bibinfo{volume}{389},
  \bibinfo{number}{6761} (\bibinfo{year}{2025}), \bibinfo{pages}{727--731}.
\newblock


\bibitem[Sutton(1991)]%
        {sutton1991dyna}
\bibfield{author}{\bibinfo{person}{Richard~S Sutton}.}
  \bibinfo{year}{1991}\natexlab{}.
\newblock \showarticletitle{Dyna, an integrated architecture for learning,
  planning, and reacting}.
\newblock \bibinfo{journal}{\emph{ACM SIGART Bulletin}} \bibinfo{volume}{2},
  \bibinfo{number}{4} (\bibinfo{year}{1991}), \bibinfo{pages}{160--163}.
\newblock


\bibitem[Tao et~al\mbox{.}(2023)]%
        {tao2023laser}
\bibfield{author}{\bibinfo{person}{Tao Tao}, \bibinfo{person}{Guannan Zheng},
  \bibinfo{person}{Qing Jia}, \bibinfo{person}{Rui Yan}, {and}
  \bibinfo{person}{Jian Zheng}.} \bibinfo{year}{2023}\natexlab{}.
\newblock \showarticletitle{Laser pulse shape designer for direct-drive
  inertial confinement fusion implosions}.
\newblock \bibinfo{journal}{\emph{High Power Laser Science and Engineering}}
  \bibinfo{volume}{11} (\bibinfo{year}{2023}), \bibinfo{pages}{e41}.
\newblock


\bibitem[Trabucco et~al\mbox{.}(2022)]%
        {trabucco2022design}
\bibfield{author}{\bibinfo{person}{Brandon Trabucco}, \bibinfo{person}{Xinyang
  Geng}, \bibinfo{person}{Aviral Kumar}, {and} \bibinfo{person}{Sergey
  Levine}.} \bibinfo{year}{2022}\natexlab{}.
\newblock \bibinfo{title}{Design-Bench: Benchmarks for Data-Driven Offline
  Model-Based Optimization}.
\newblock
\showeprint[arxiv]{2202.08450}~[cs.LG]
\urldef\tempurl%
\url{https://arxiv.org/abs/2202.08450}
\showURL{%
\tempurl}


\bibitem[Wang et~al\mbox{.}(2024)]%
        {wang2024multifidelity}
\bibfield{author}{\bibinfo{person}{Jingyi Wang}, \bibinfo{person}{N Chiang},
  \bibinfo{person}{Andrew Gillette}, {and} \bibinfo{person}{J~Luc Peterson}.}
  \bibinfo{year}{2024}\natexlab{}.
\newblock \showarticletitle{A multifidelity Bayesian optimization method for
  inertial confinement fusion design}.
\newblock \bibinfo{journal}{\emph{Physics of Plasmas}} \bibinfo{volume}{31},
  \bibinfo{number}{3} (\bibinfo{year}{2024}).
\newblock


\bibitem[Wang et~al\mbox{.}(2025)]%
        {wang2025predictive}
\bibfield{author}{\bibinfo{person}{Zixu Wang}, \bibinfo{person}{Yuhan Wang},
  \bibinfo{person}{Junfei Ma}, \bibinfo{person}{Fuyuan Wu},
  \bibinfo{person}{Junchi Yan}, \bibinfo{person}{Xiaohui Yuan},
  \bibinfo{person}{Zhe Zhang}, {and} \bibinfo{person}{Jie Zhang}.}
  \bibinfo{year}{2025}\natexlab{}.
\newblock \showarticletitle{Predictive Hydrodynamic Simulations for Laser
  Direct-drive Implosion Experiments via Artificial Intelligence}.
\newblock \bibinfo{journal}{\emph{arXiv preprint arXiv:2507.16227}}
  (\bibinfo{year}{2025}).
\newblock


\bibitem[Wei et~al\mbox{.}(2024)]%
        {wei2024machine}
\bibfield{author}{\bibinfo{person}{S Wei}, \bibinfo{person}{F Wu},
  \bibinfo{person}{Y Zhu}, \bibinfo{person}{J Yang}, \bibinfo{person}{L Zeng},
  \bibinfo{person}{X Li}, {and} \bibinfo{person}{J Zhang}.}
  \bibinfo{year}{2024}\natexlab{}.
\newblock \showarticletitle{A Machine Learning Method for the Optimization
  Design of Laser Pulse in Fast Ignition Simulations}.
\newblock \bibinfo{journal}{\emph{Journal of Fusion Energy}}
  \bibinfo{volume}{43}, \bibinfo{number}{1} (\bibinfo{year}{2024}),
  \bibinfo{pages}{6}.
\newblock


\bibitem[Wu et~al\mbox{.}(2022)]%
        {wu2022machine}
\bibfield{author}{\bibinfo{person}{Fuyuan Wu}, \bibinfo{person}{Xiaohu Yang},
  \bibinfo{person}{Yanyun Ma}, \bibinfo{person}{Qi Zhang}, \bibinfo{person}{Zhe
  Zhang}, \bibinfo{person}{Xiaohui Yuan}, \bibinfo{person}{Hao Liu},
  \bibinfo{person}{Zhengdong Liu}, \bibinfo{person}{Jiayong Zhong},
  \bibinfo{person}{Jian Zheng}, {et~al\mbox{.}}}
  \bibinfo{year}{2022}\natexlab{}.
\newblock \showarticletitle{Machine-learning guided optimization of laser
  pulses for direct-drive implosions}.
\newblock \bibinfo{journal}{\emph{High Power Laser Science and Engineering}}
  \bibinfo{volume}{10} (\bibinfo{year}{2022}), \bibinfo{pages}{e12}.
\newblock


\bibitem[Zeng et~al\mbox{.}(2022)]%
        {zeng2022transformerseffectivetimeseries}
\bibfield{author}{\bibinfo{person}{Ailing Zeng}, \bibinfo{person}{Muxi Chen},
  \bibinfo{person}{Lei Zhang}, {and} \bibinfo{person}{Qiang Xu}.}
  \bibinfo{year}{2022}\natexlab{}.
\newblock \bibinfo{title}{Are Transformers Effective for Time Series
  Forecasting?}
\newblock
\showeprint[arxiv]{2205.13504}~[cs.AI]
\urldef\tempurl%
\url{https://arxiv.org/abs/2205.13504}
\showURL{%
\tempurl}


\end{thebibliography}


\appendix

\section{Dataset and Data Augmentation}
\label{sec:appendix:data}
This section describes the dataset composition and the three data augmentation strategies (VAE, PPCA, Fourier-domain noise) used to expand training coverage beyond the limited set of experimentally measured pulses.

\subsection{Dataset Overview}
The dataset contains about $90$k normalized pairs, stored in HDF5 format by shot ID.
Each quantity is interpolated onto a uniform grid of 160 time steps (0.05~ns resolution) and stored as a fixed-length vector.
Each input consists of a laser power pulse and three discrete target parameters.

\begin{figure}[htbp]
    \centering
    \includegraphics[width=\linewidth]{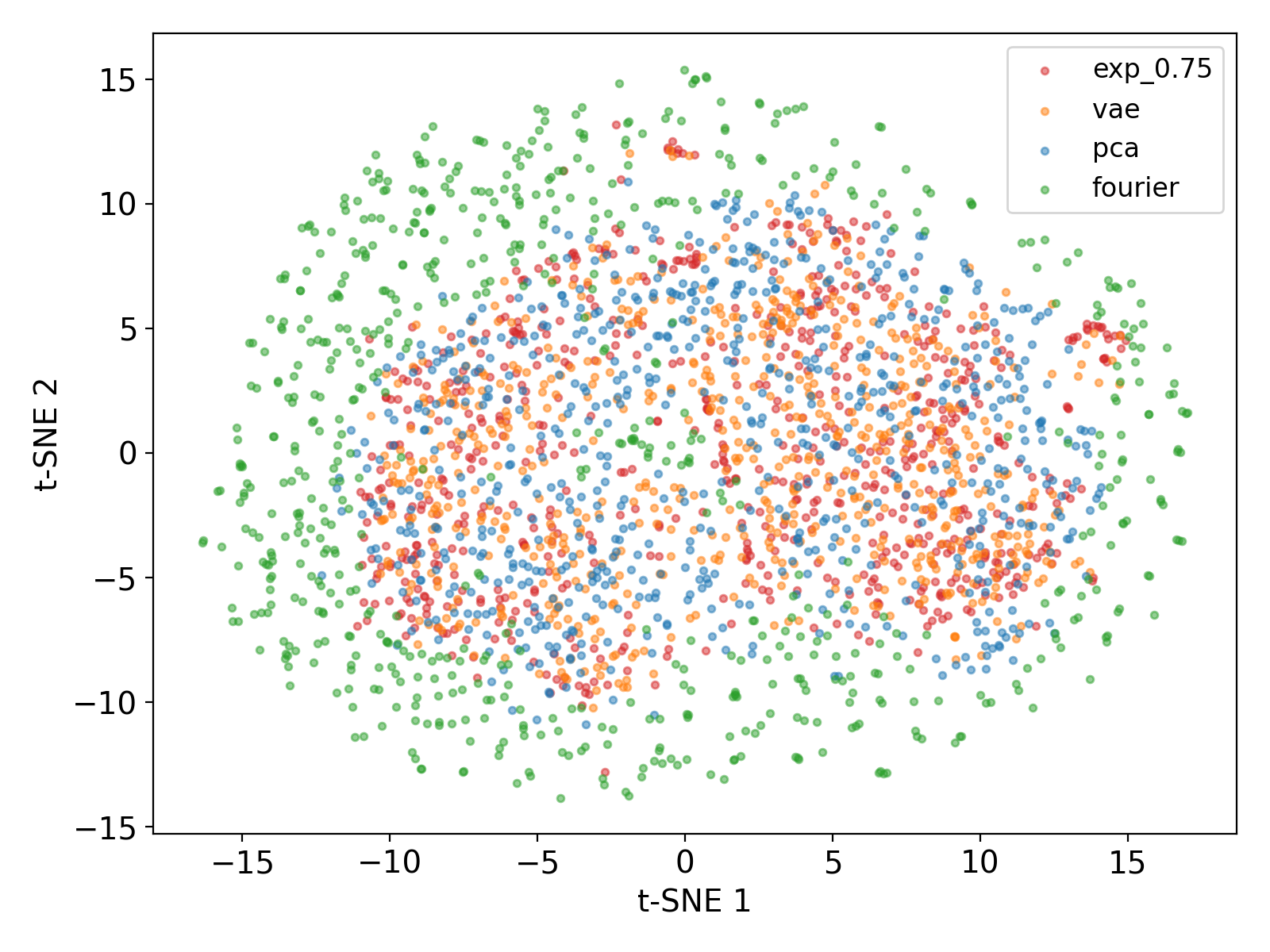}
    \caption{t-SNE visualization of input laser pulses across dataset subsets. Coverage increases from Real (701 pulses) to VAE-augmented, PPCA, and Fourier-noise samples.}
    \Description{t-SNE scatter plot of dataset subsets, with experimental, VAE, and PPCA samples concentrated in the center and Fourier-augmented samples spread over a wider surrounding region.}
    \label{fig:tsne_pulses}
\end{figure}

\subsection{VAE Sampling}
\label{sec:appendix:data:vae_sampling}
We employ a variational autoencoder (VAE) to learn a compact generative representation of measured laser pulses for offline data augmentation (\texttt{VAE\_aug}).
\texttt{VAE\_aug} is trained \emph{only} on the \emph{Real} subset, producing samples that follow the same empirical distribution as measured pulses.
This dataset VAE is \emph{not} the VAE used in the downstream optimization stage.

\subsubsection{Model and Training Objective}
Let $p \in \mathbb{R}^{T}$ denote a normalized laser power pulse interpolated onto a uniform grid with $T=160$ time steps.
The VAE consists of an encoder $q_\phi(z|p)$ and a decoder $p_\theta(p|z)$, where $z\in\mathbb{R}^{d}$ is a latent code.
We assume a standard Gaussian prior $p(z)=\mathcal{N}(0,I)$ and train the VAE by maximizing the evidence lower bound (ELBO):
\begin{equation}
\mathcal{L}_{\mathrm{VAE}}(p;\phi,\theta)
=
\mathbb{E}_{z\sim q_\phi(z\mid p)}\big[ \|p-\hat{p}\|_2^2 \big]
+
\beta\,\mathrm{KL}\left(q_\phi(z\mid p)\|,\mathcal{N}(0,I)\right),
\label{eq:vae_elbo}
\end{equation}
\begin{equation}
    \hat{p}=D_\theta(z),
\end{equation}
where $D_\theta(\cdot)$ denotes the decoder, and $\beta$ is a weighting factor controlling the strength of the KL regularization.

\subsubsection{Sampling Procedure}
After pretraining, we draw latent vectors from the prior and decode them:
\begin{equation}
z \sim \mathcal{N}(0,I),\qquad
\tilde{p} = \Pi_{\ge 0}\!\left(D_\theta(z)\right),
\label{eq:vae_sampling}
\end{equation}
where $\Pi_{\ge 0}(\cdot)$ denotes a non-negativity projection (implemented as $\max(\cdot,0)$) to ensure physically valid laser power.

\subsection{PPCA Sampling}
\label{sec:appendix:data:ppca_sampling}
To provide a linear generative baseline complementary to the non-linear VAE, we utilize Probabilistic Principal Component Analysis (PPCA).
Similar to the VAE strategy, the PPCA model is fitted exclusively on the \emph{Real} subset.

\subsubsection{Model and Training Objective}
We model the observed pulse $p \in \mathbb{R}^{T}$ as a linear transformation of a lower-dimensional latent variable $z \in \mathbb{R}^{d_{pca}}$ with additive Gaussian noise:
\begin{equation}
p = W z + \mu + \epsilon,
\label{eq:ppca_model}
\end{equation}
where $\mu \in \mathbb{R}^{T}$ is the data mean, $W \in \mathbb{R}^{T \times d_{pca}}$ is the factor loading matrix, and $\epsilon \sim \mathcal{N}(0, \sigma^2 I)$ represents isotropic noise.

\subsubsection{Sampling Procedure}
\begin{equation}
z \sim \mathcal{N}(0,I), \qquad
\tilde{p} = \Pi_{\ge 0}\!\left( W z + \mu + \epsilon_{\mathrm{sample}} \right),
\label{eq:ppca_sampling}
\end{equation}
where $\epsilon_{\mathrm{sample}}$ is drawn from $\mathcal{N}(0, \sigma^2 I)$, and $\Pi_{\ge 0}(\cdot)$ enforces the physical non-negativity constraint.

\subsection{Fourier-Domain Exponential-Decay Noise Augmentation}
\label{sec:appendix:data:fourier_noise}

We perturb measured pulses in the Fourier domain to generate OOD yet physically plausible laser pulses.
For a real-valued input pulse $p(t)\in\mathbb{R}^{T}$, we compute its one-sided real FFT
\begin{equation}
P_k=\mathrm{RFFT}(p)_k\in\mathbb{C}
\end{equation}
then inject additive complex Gaussian noise whose magnitude decays exponentially with frequency index:
\begin{equation}
\tilde{P}_k = P_k + A\exp\left(-\frac{k}{\tau}\right)\epsilon_k,\quad
\epsilon_k \sim \mathcal{N}(\mu, 1)+ i\mathcal{N}(\mu,1)
\end{equation}
where $k$ is the discrete frequency index, $\tau$ is the decay factor, and $A$ sets the overall perturbation strength.
The exponential envelope biases perturbations toward low-frequency components.
This largely preserves coarse pulse structure while producing diverse pulse variations.
As noted in the main text, the exponential cutoff suppresses high-frequency noise and respects bandwidth and slew-rate limits of real ICF lasers.

Finally, we transform back via
\begin{equation}
\tilde{p}(t)=\max(\mathrm{IRFFT}(\tilde{P}), 0)
\end{equation}
to enforce continuity and non-negativity.

\section{Physics-Informed Inequality Loss: Full Derivations}
\label{sec:appendix:physical_loss}
This section provides complete derivations of the physics-informed inequality constraints used to regularize the surrogate during pretraining (Section~\ref{sec:method:physics_loss}).

\subsection{ICF Implosion in Three Stages}
\label{sec:appendix:icf_stages}

An ICF implosion proceeds through three stages with different dominant mechanisms. These stages motivate the inequality constraints below.

\paragraph{Stage~1: Laser ablation and rocket effect.}
An intense laser pulse irradiates the fuel capsule surface.
The outer material heats and expands outward as a plasma corona; by momentum conservation, this ablation drives an inward \emph{rocket reaction} that accelerates the remaining shell.
The mass ablation rate $\dot{m}_\alpha$ is set by laser intensity $I_L$ and wavelength $\lambda_L$ through Eq.~\eqref{eq:mdot_alpha_appendix}.
Because ablation removes mass from the shell, enclosed mass decreases monotonically during this stage.
The ablation pressure also imparts radial momentum, analogous to rocket thrust from mass ejection.

\paragraph{Stage~2: Implosion and compression.}
After the main drive ends, the inward-moving shell coasts and converges toward the center, compressing the fuel to extreme densities ($\bar\rho \sim 10^2$--$10^3\;\mathrm{g/cm^3}$). During this inertial coasting phase, no further momentum is injected; the shell kinetic energy is converted into internal energy as the fuel stagnates. The implosion velocity $v_c$ and remaining shell mass $m$ together determine the kinetic energy budget available for compression. Conservation laws demand that the enclosed momentum and kinetic energy cannot exceed the total impulse and energy delivered during Stage~1---these are the \emph{momentum} and \emph{energy confinement} constraints derived below.

\paragraph{Stage~3: Stagnation and thermonuclear burn.}
When the imploding shell decelerates against the compressed fuel core, kinetic energy is thermalized and the fuel reaches peak temperature and density.
If the Lawson criterion $\rho R T \geq 1.5\;\mathrm{g\cdot keV\cdot cm^{-2}}$ is met, a thermonuclear burn wave propagates before hydrodynamic disassembly quenches it.
Fusion yield depends on how efficiently implosion kinetic energy is converted into thermal energy at stagnation, which is set by the optimized pulse shape.

This three-stage picture explains why conservation constraints matter.
A surrogate that predicts too much enclosed mass or momentum is physically impossible.
The inequalities below encode mass, momentum, and energy bookkeeping constraints.

\subsection{Setup and Notation}

We consider a 1D Lagrangian capsule discretized into $N$ mass layers ($N=160$ in MULTI), indexed by $i\in\{1,\dots,N\}$, with fixed initial layer masses $m_i$.
Let $t_n$ denote the uniformly sampled time grid ($n=1,\dots,T$, $T=160$) and $\Delta t$ the timestep.
The total initial mass is
\begin{equation}
M_0=\sum_{i=1}^{N} m_i.
\end{equation}
After de-normalization, the surrogate predicts:
(i) boundary shock index $\hat I(t_n)$,
(ii) boundary radius $\hat R_I(t_n)$,
(iii) mass-weighted average velocity $\hat v_c(t_n)$,
together with other state histories.

\subsection{Differentiable Mass Operator}
Define the cumulative initial mass
\begin{equation}
M(k)=\sum_{i=1}^{k} m_i,\quad k\in\{0,1,\dots,N\},\, M(0)=0.
\end{equation}
We define a differentiable \emph{mass-from-index} operator $\mathcal{M}(\cdot)$ via linear interpolation:
\begin{equation}
\hat m(t_n) =\mathcal{M}(\hat I(t_n)) =(1-\delta_n)M(\lfloor \hat I_n\rfloor)+\delta_n M(\lceil \hat I_n\rceil)
\label{eq:mass_interp_appendix}
\end{equation}
where $\delta_n=\hat I_n-\lfloor \hat I_n\rfloor$, and $\hat I_n=\mathrm{clip}(\hat I(t_n),0,N)$.

\subsection{Steady Ablation Model}

Following the ablation rate formula \cite{Atzeni2004PhysicsInertialFusion}:
\begin{equation}
\dot m_\alpha(t)
=1.3\times 10^{6}
\Big(\frac{A}{2Z}\Big)^{2/3}
\Big(\frac{\lambda_L}{0.35\,\mu\mathrm{m}}\Big)^{-4/3}
\Big(\frac{I_L(t)}{10^{15}\,\mathrm{W/cm^2}}\Big)^{1/3}
\label{eq:mdot_alpha_appendix}
\end{equation}
When only laser power $P(t)$ is available, we use the boundary radius to approximate intensity:
\begin{equation}
I_L(t)\approx \frac{P(t)}{4\pi \hat R_I^2(t)}.
\label{eq:intensity_approx_appendix}
\end{equation}
The total ablated mass rate is then
\begin{equation}
\dot M_{\text{abl}}(t)=4\pi \hat R_I^2(t)\,\dot m_\alpha(t).
\label{eq:Mdot_appendix}
\end{equation}
In discrete time, the cumulative ablated mass and the remaining-mass \emph{upper bound} are
\begin{equation}
\Delta M_{\text{abl}}(t_n)=\dot M_{\text{abl}}(t_n)\Delta t,\,
M_{\lim}(t_n)=\max\Big(0,\,M_0-\sum_{k=1}^{n}\Delta M_{\text{abl}}(t_k)\Big).
\label{eq:mass_limit_appendix}
\end{equation}

\subsection{Physical Confinement Constraints}

\subsubsection{Momentum confinement}
Define the predicted enclosed momentum as
\begin{equation}
    \hat p(t_n)=\hat m(t_n)\,|\hat v_c(t_n)|.
    \label{eq:p_hat_appendix}
\end{equation}
We approximate the exhaust speed:
\begin{equation}
    u_{\text{ex}}(t)\approx 2c_s(t)\approx 2\sqrt{\frac{k_B T_I^*(t)}{m_{\text{ion}}}},
\label{eq:uex_appendix}
\end{equation}
The impulse budget up to $t_n$ is
\begin{equation}
    p_{\lim}(t_n)=\sum_{k=1}^{n}\Delta M_{\text{abl}}(t_k)\,u_{\text{ex}}(t_k).
    \label{eq:p_limit_appendix}
\end{equation}
The momentum constraint is enforced by a squared hinge penalty:
\begin{equation}
    \mathcal{L}_p
    =\frac{1}{T}\sum_{n=1}^{T}
    \Big[\max\big(0,\,\hat p(t_n)-p_{\lim}(t_n)\big)\Big]^2.
    \label{eq:L_p_appendix}
\end{equation}

\subsubsection{Energy confinement}
Define the predicted kinetic energy of the enclosed mass:
\begin{equation}
\hat E_{\text{kin}}(t_n)=\frac{1}{2}\hat m(t_n)\hat v_c^2(t_n).
\label{eq:E_hat_appendix}
\end{equation}
Using the remaining-mass bound $M_{\lim}(t_n)$:
\begin{equation}
u_{\text{imp}}(t_n)
=\sum_{k=1}^{n} u_{\text{ex}}(t_k)\,
\frac{\Delta M_{\text{abl}}(t_k)}{M_{\lim}(t_k)+\varepsilon},
\qquad \varepsilon>0.
\label{eq:uimp_appendix}
\end{equation}
The corresponding energy upper bound is
\begin{equation}
E_{\lim}(t_n)=\frac{1}{2}M_{\lim}(t_n)\,u_{\text{imp}}^2(t_n).
\label{eq:E_limit_appendix}
\end{equation}
\begin{equation}
\mathcal{L}_E
=\frac{1}{T}\sum_{n=1}^{T}
\Big[\max\big(0,\,\hat E_{\text{kin}}(t_n)-E_{\lim}(t_n)\big)\Big]^2.
\label{eq:L_E_appendix}
\end{equation}

\subsubsection{Mass confinement}
\begin{equation}
\mathcal{L}_m
=\frac{1}{T}\sum_{n=1}^{T}
\Big[\max\big(0,\,\hat m(t_n)-M_{\lim}(t_n)\big)\Big]^2.
\label{eq:L_m_appendix}
\end{equation}

\subsection{Overall Objective}
\begin{equation}
\mathcal{L}
=\mathcal{L}_{\text{data}}
+\lambda_m \mathcal{L}_m
+\lambda_p \mathcal{L}_p
+\lambda_E \mathcal{L}_E.
\label{eq:total_loss_appendix}
\end{equation}
In practice, these constraints matter most during ablation and compression.
When the boundary index collapses to $\hat I(t)\approx 0$ after ignition, $\hat m(t)$ becomes small and the penalties naturally deactivate.
Because the constraints operate on full trajectories, they apply only to the non-autoregressive surrogate.
Adding them to an autoregressive rollout would conflate physics violations with rollout error accumulation.

\section{PPO Objective Derivation for the One-Step Setting}
\label{sec:appendix:objective}

We derive the PPO-style surrogate objective used in Section~\ref{sec:method:ppo} for the one-step episodic setting.
The policy $\pi_\theta$ samples a latent action $\mathbf{z}\in\mathbb{R}^{d_z}$, which is decoded into a pulse
$\mathbf{p}=\mathbf{D}_\psi(\mathbf{z})$ by a fixed decoder $\mathbf{D}_\psi$.
The scalar reward is $r(\mathbf{p})$, hence the expected return is
\begin{equation}
\eta(\theta) \triangleq \mathbb{E}_{\mathbf{z}\sim \pi_\theta}\!\left[r\!\left(\mathbf{D}_\psi(\mathbf{z})\right)\right].
\end{equation}

\subsection{Bandit Advantage and Policy Difference Identity}
Given a reference (behavior) policy $\pi_{\text{old}}$, we define the one-step advantage as
\begin{equation}
A_{\text{old}}(\mathbf{z})
\triangleq
r\!\left(\mathbf{D}_\psi(\mathbf{z})\right)
-
\mathbb{E}_{\mathbf{z}'\sim \pi_{\text{old}}}\!\left[r\!\left(\mathbf{D}_\psi(\mathbf{z}')\right)\right].
\label{eq:appendix:adv_def}
\end{equation}

\begin{lemma}[Policy difference identity in the one-step setting]
\label{lem:appendix:bandit_identity}
For any policy $\pi_\theta$ and reference policy $\pi_{\text{old}}$,
\begin{equation}
\eta(\theta) = \eta(\pi_{\text{old}}) + \mathbb{E}_{\mathbf{z}\sim \pi_\theta}\!\left[A_{\text{old}}(\mathbf{z})\right].
\label{eq:appendix:bandit_identity}
\end{equation}
\end{lemma}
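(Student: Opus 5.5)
The identity follows from linearity of expectation once we use that the baseline subtracted in $A_{\text{old}}$ is a constant. By the definition of $\eta$ in the one-step setting, $\eta(\pi_{\text{old}}) = \mathbb{E}_{\mathbf{z}'\sim\pi_{\text{old}}}[r(\mathbf{D}_\psi(\mathbf{z}'))]$. This is a scalar that does not depend on the integration variable $\mathbf{z}$. By \eqref{eq:appendix:adv_def} we can therefore write $A_{\text{old}}(\mathbf{z}) = r(\mathbf{D}_\psi(\mathbf{z})) - \eta(\pi_{\text{old}})$, i.e.\ the advantage is the reward shifted by a constant.

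The steps, in order, are as follows. First, take the expectation of $A_{\text{old}}(\mathbf{z})$ under $\mathbf{z}\sim\pi_\theta$. Second, split it by linearity into $\mathbb{E}_{\mathbf{z}\sim\pi_\theta}[r(\mathbf{D}_\psi(\mathbf{z}))] - \eta(\pi_{\text{old}})\,\mathbb{E}_{\mathbf{z}\sim\pi_\theta}[1]$. Third, identify the first term as $\eta(\theta)$ by definition, and use the fact that $\pi_\theta$ is a probability distribution, so the second expectation equals $1$. This gives $\mathbb{E}_{\mathbf{z}\sim\pi_\theta}[A_{\text{old}}(\mathbf{z})] = \eta(\theta) - \eta(\pi_{\text{old}})$, and rearranging yields \eqref{eq:appendix:bandit_identity}.

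There is no real obstacle here. In contrast with the multi-step performance difference lemma, there is no state-visitation distribution or telescoping of value functions, because the episode has a single step and a single (fixed) context $\mathbf{t}$. The only point that needs care is integrability: we need $\mathbb{E}_{\pi_\theta}|r(\mathbf{D}_\psi(\mathbf{z}))|<\infty$ and $\mathbb{E}_{\pi_{\text{old}}}|r(\mathbf{D}_\psi(\mathbf{z}))|<\infty$ for the expectations to be finite and the linearity split to be legitimate.

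This holds in our setting. The reward is a continuous function of the surrogate output, namely the peak $\hat\rho\hat R\hat T$ minus a hinge penalty. It is evaluated on pulses produced by a fixed neural decoder. Under the Gaussian latent policies used in PPO, this reward either is bounded or has finite moments. We therefore state this integrability assumption explicitly and then conclude.
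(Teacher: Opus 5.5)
Your proof is correct and takes the same approach as the paper: both rely on linearity of expectation and on the subtracted baseline $\mathbb{E}_{\mathbf{z}'\sim\pi_{\text{old}}}[r(\mathbf{D}_\psi(\mathbf{z}'))]=\eta(\pi_{\text{old}})$ being a constant; the paper just runs the computation in reverse, starting from $\eta(\theta)-\eta(\pi_{\text{old}})$ and moving the constant inside the $\pi_\theta$-expectation. Your explicit integrability assumption is a harmless addition that the paper leaves implicit.
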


\begin{proof}
By definition,
\begin{align}
\eta(\theta) - \eta(\pi_{\text{old}})
&=
\mathbb{E}_{\mathbf{z}\sim \pi_\theta}\!\left[r\!\left(\mathbf{D}_\psi(\mathbf{z})\right)\right]
-
\mathbb{E}_{\mathbf{z}'\sim \pi_{\text{old}}}\!\left[r\!\left(\mathbf{D}_\psi(\mathbf{z}')\right)\right] \nonumber\\
&=
\mathbb{E}_{\mathbf{z}\sim \pi_\theta}\!\left[
r\!\left(\mathbf{D}_\psi(\mathbf{z})\right)
-
\mathbb{E}_{\mathbf{z}'\sim \pi_{\text{old}}}\!\left[r\!\left(\mathbf{D}_\psi(\mathbf{z}')\right)\right]
\right] \nonumber\\
&= \mathbb{E}_{\mathbf{z}\sim \pi_\theta}\!\left[A_{\text{old}}(\mathbf{z})\right],
\end{align}
where the second line uses that the $\pi_{\text{old}}$-expectation is a constant independent of $\mathbf{z}$.
\end{proof}

\subsection{Importance-Sampling Surrogate Objective}
Using importance sampling,
\begin{equation}
\mathbb{E}_{\mathbf{z}\sim \pi_\theta}[f(\mathbf{z})]
=
\mathbb{E}_{\mathbf{z}\sim \pi_{\text{old}}}\!\left[
\frac{\pi_\theta(\mathbf{z})}{\pi_{\text{old}}(\mathbf{z})} f(\mathbf{z})
\right].
\label{eq:appendix:is}
\end{equation}
Applying~\eqref{eq:appendix:is} to~\eqref{eq:appendix:bandit_identity} yields
\begin{equation}
\eta(\theta)
=
\eta(\pi_{\text{old}})
+
\mathbb{E}_{\mathbf{z}\sim \pi_{\text{old}}}\!\left[
\rho_\theta(\mathbf{z})\,A_{\text{old}}(\mathbf{z})
\right],
\qquad
\rho_\theta(\mathbf{z})\triangleq \frac{\pi_\theta(\mathbf{z})}{\pi_{\text{old}}(\mathbf{z})}.
\label{eq:appendix:surrogate_obj}
\end{equation}
Eq.~\eqref{eq:appendix:surrogate_obj} motivates the PPO-style surrogate objective used in the main text.

\end{document}